\documentclass[twoside,11pt]{article}

\usepackage{blindtext}
\usepackage{amsmath}
\usepackage{tcolorbox}
\usepackage{algorithm}
\usepackage{bm}
\usepackage{algorithmic}
\usepackage{amsthm} 
\usepackage{dsfont}
\usepackage{amsfonts} 
\newtheorem{assumption}{Assumption}
\usepackage{natbib}
\usepackage{graphicx}
\usepackage{natbib}
\usepackage{booktabs}
\usepackage{float}
\usepackage{multirow}
\usepackage{subcaption}
\tcbuselibrary{skins}
\tcbset{
    generator_box/.style={
        enhanced,
        colback=yellow!10,
        colframe=yellow!50!black,
        fonttitle=\bfseries,
        title=Generator Prompt,
        attach boxed title to top left={yshift=-\tcboxedtitleheight/2},
        boxed title style={size=small,colback=yellow!50!black,colframe=yellow!50!black},
        coltitle=white,
    },
    discriminator_box/.style={
        enhanced,
        colback=blue!10,
        colframe=blue!50!black,
        fonttitle=\bfseries,
        title=Discriminator Prompt,
        attach boxed title to top left={yshift=-\tcboxedtitleheight/2},
        boxed title style={size=small,colback=blue!50!black,colframe=blue!50!black},
        coltitle=white,
    }
}
\usepackage[preprint]{jmlr2e}
\newcommand{\argmax}{\operatornamewithlimits{arg\,max}}

\usepackage{lastpage}
\begin{document}

\title{ALIGN: Aligned Delegation with Performance Guarantees for Multi-Agent LLM Reasoning}

\author{\name Tong Zhu\thanks{Equal Contribution.} \email toz015@ucla.edu \\
       \addr Department of Biostatistics, UCLA
       \AND
       \name Baiting Chen\footnotemark[1] \email brantchen@ucla.edu \\
       \addr Department of Statistics and Data Science, UCLA
       \AND
       \name Jin Zhou \email jinjinzhou@ucla.edu\\
     \addr Department of Biostatistics, UCLA
       \AND
       \name Hua Zhou \email  huazhou@ucla.edu \\
       \addr 
       Department of Biostatistics, UCLA
       \AND
       \name Sriram Sankararaman \email sriram@cs.ucla.edu \\
       \addr Department of Computer Science, UCLA
       \AND
       \name Xiaowu Dai\footnotemark[2] \email daix@ucla.edu \\
       \addr Departments of Statistics and Data Science, and of Biostatistics, UCLA}
       
\maketitle     
\renewcommand{\thefootnote}{\fnsymbol{footnote}}
\footnotetext [2] {\textit{Address for correspondence:} Xiaowu Dai, Department of Statistics and Data Science, UCLA, 8125 Math Sciences Bldg \#951554,  Los Angeles, CA 90095, USA. Email: daix@ucla.edu.}
%\editor{My editor}

\begin{abstract}\label{sec:abstract}
LLMs often underperform on complex reasoning tasks when relying on a single generation-and-selection pipeline. Inference-time ensemble methods can improve performance by sampling diverse reasoning paths or aggregating multiple candidate answers, but they typically treat candidates independently and provide no formal guarantees that ensembling improves reasoning quality.
We propose a novel method, \emph{Aligned Delegation for Multi-Agent LLM Reasoning} (ALIGN), which formulates LLM reasoning as an aligned delegation game. In ALIGN, a principal delegates a task to multiple agents that generate candidate solutions under designed incentives, and then selects among their outputs to produce a final answer. This formulation induces structured interaction among agents while preserving alignment between agent and principal objectives.
We establish theoretical guarantees showing that, under a fair comparison with equal access to candidate solutions, ALIGN provably improves expected performance over single-agent generation. Our analysis accommodates correlated candidate answers and relaxes independence assumptions that are commonly used in prior work. Empirical results across a broad range of LLM reasoning benchmarks consistently demonstrate that ALIGN outperforms strong single-agent and ensemble baselines.
\end{abstract}

\section{Introduction}
Large language models (LLMs) have demonstrated notable generalization and reasoning abilities across diverse tasks involving language understanding, generation, and decision making. \citep{achiam2023gpt, koike2024outfox,chiang2024enhancing,han2025token}. However, they continue to face challenges in complex reasoning problems that require multi-step reasoning, where problems cannot be solved in a single generation but must be decomposed into subproblems and integrated through intermediate results \citep{mirzadeh2024gsm, stechly2024self}. For example, solving mathematical problems often involves a chain of deductions in which each step depends on the previous one \citep{imani2023mathprompter}. A prominent line of work focuses on inference-time prompting strategies, which guide models to generate diverse reasoning paths, such as chain-of-thought prompting \citep{wei2022chain} or sampling multiple rationales \citep{guan2025rstar}, followed by selection using reward models or verifiers \citep{lightman2023let}. However, these methods treat each reasoning path independently, without principled mechanisms for interaction or refinement, and their effectiveness remains limited by the capacity of individual models \citep{spraguemusr, xu2024pride, stechly2024self, wang2022self}.

% Recent efforts to improve reasoning have mainly taken two forms:
% (1) fine-tuning with data or human feedback, which directly adjusts model parameters \citep{sun2023aligning,wang2024mathcoder}; and
% (2) inference-time improvements, such as prompting diverse reasoning paths \citep{wei2022chain, guan2025rstar} or enabling self-improvement through reflection, critique, and refinement \citep{madaan2023self, cheng2024self}.

Recent work has explored more interactive inference-time strategies for improving LLM reasoning. Self-reflection and critique-based methods encourage iterative refinement of model outputs \citep{madaan2023self,cheng2024self}, while ensemble approaches leverage multiple LLMs engaging in debate, feedback exchange, or negotiation to improve answer quality \citep{huang2024ensemble,chen2025incentivizing,wang2024mobile}. Despite empirical improvements, these methods provide no formal guarantees that refinement or ensembling consistently improves reasoning quality. In addition, ensemble approaches typically rely on each participating model having strong capacity, which limits their effectiveness when individual models are weaker.

We propose \emph{Aligned Delegation for Multi-Agent LLM Reasoning} (ALIGN), a training-free, game-theoretic framework for improving LLM reasoning \emph{without} additional fine-tuning or task-specific retraining. In ALIGN, multiple \emph{agent LLMs} independently generate candidate answers and submit one for evaluation, while a \emph{principal LLM} ranks the submissions and selects the final output. The utility of each agent is determined by its internal utility over the answer and its relative ranking given by the principal, so each agent must balance its internal utility with the likelihood of receiving a favorable ranking from the principal. This delegation-based structure ensures that agents are incentivized to improve their answers in ways that align with the principal's evaluation criteria.

To implement this framework, we use online mirror descent to iteratively update the policy of each agent, allowing the system to converge to equilibrium.  Conceptually, ALIGN draws intuition from competition-driven systems studied across disciplines: in evolutionary settings, competition among diverse entities promotes adaptation and higher-quality outcomes \citep{endler1986natural,albadr2020genetic}, while in economics, market competition among self-interested agents can improve aggregate efficiency \citep{podolny1993status,gupta2016marketing}. In ALIGN, analogous competitive pressures among agents incentivize the exploration of higher-quality reasoning strategies under aligned objectives. This collective dynamic enables a scalable approach to improving inference-time reasoning without requiring golden answers. Empirical results across multiple reasoning benchmarks show that ALIGN consistently outperforms strong single-agent and ensemble baselines.

To summarize, our contributions are threefold:
\begin{itemize}
\item We formalize inference-time multi-agent LLM reasoning as a delegation game and provide a learning algorithm based on online mirror descent.
\item We establish theoretical guarantees for delegation-based reasoning under a fair comparison, showing strict performance improvement over single-agent generation while allowing for correlated candidate answers. We further characterize the learning dynamics by proving sublinear regret and convergence of average policies to a Nash equilibrium.
\item We present extensive empirical evaluations across diverse reasoning benchmarks that validate the theoretical findings and quantify the improvement of delegation-based inference relative to strong generation-and-selection and ensemble baselines.
\end{itemize}

%%%%%%%%%%%%%%%%%%%%%%%%%%%%%%%%%%%%%%%%%%%%%%%%%
%\subsection{Extentions}
%Sriram's suggestions on (i) genetic benchmarks; (ii) correct paper citations. 

%%%%%%%%%%%%%%%%%%%%%%%%%%%%%%%%%%%%%%%%%%%%%%%%
\section{Methodology}\label{sec:method}

\subsection{Motivation}\label{sec:motivation}
LLMs are observed to fall short on complex tasks such as mathematical reasoning, multi-step planning, and commonsense reasoning  \citep{mirzadeh2024gsm,stechly2024self,kwon2024toward}. One promising direction to address these limitations is to encourage a \emph{single} LLM to think like humans by generating multiple reasoning paths and selecting the most plausible answer, analogous to how humans tackle complex problems through diverse strategies \citep{yao2023react}. These are often coupled with a verifier model or selection criterion to identify and retain the most consistent or accurate response \citep{guan2025rstar,cobbe2021training}.
Specifically, for a given task \(t \in \mathcal{T}\), an LLM may generate multiple candidate answers \(\{\omega_1, \ldots, \omega_K\}\) either through repeated sampling or by encoding diverse reasoning paths in prompts \citep{guan2025rstar,cobbe2021training}. The final answer is selected by maximizing utility \(U: \mathcal{A} \to \mathbb{R},\)
where $\mathcal{A}$ denotes the set of admissible answers, i.e., the LLM output space.  The utility $U$ can be instantiated as an external reward model or derived from internal evaluations such as consistency or factuality checks \citep{zhang2024generative,guan2025rstar}. That is, the selected answer is given by
\(
\omega^* = \argmax_{\omega \in \{\omega_1, \ldots, \omega_K\}} U(\omega). 
\)

However, this \emph{single}-LLM approach faces two main challenges. First, answer selection can be biased and unreliable, since reward models or self-evaluations may not capture true task quality \citep{zheng2024large,wang2024large}. Second, candidate answers are usually treated in isolation,  ignoring potential complementarities or cross-validation among them.  To address these issues, we propose to leverage \emph{multi-agent} LLMs, where multiple models act as competing agents and provide diverse perspectives that mitigate bias and improve robustness in the selection process.

\subsection{Aligned Delegation for Multi-Agent LLM} \label{sec:delegation game}
We reinterpret the reasoning-and-selection process, where an LLM generates candidate responses that are later filtered into a final output, as an \emph{aligned delegation game}. In this game-theoretic view, a principal delegates tasks to self-interested agents with potentially misaligned preferences and then selects from their responses to achieve desirable outputs
\citep{fershtman1991observable, frankel2014aligned, guo2016dynamic}.
Building on this perspective, we model an aligned delegation game where agents, each following a distinct reasoning strategy, submit candidate responses, and a principal acts as centralized evaluator to rank them.
We term this setup the Aligned Delegation for Multi-Agent LLM (ALIGN), highlighting how principal feedback induces competition among heterogeneous multi-agents.

Specifically, for each task $t \in \mathcal{T}$ and each agent $i \in [N]$, the agent generates a set of candidates
\(
\mathcal{A}_i=\{\omega_{i1},\ldots,\omega_{iK}\}\subseteq \mathcal{A},
\)
then selects a submission $a_i \in \mathcal{A}_i$ according to its internal utility $U_{y_i}:\mathcal{A}\to\mathbb{R}$. This utility is operationalized via \emph{self-consistency} \citep{wang2022self}: 
\(
U_{y_i}(a) := \mathbb{P}_{i} \left( a | t \right) \approx \frac{1}{K} \sum_{k=1}^K \mathbb{I}[\omega_{ik} = a],
\)
where the probability is estimated by the empirical frequency of \(a\) among the sampled responses. Intuitively, higher utility is assigned to answers the agent would generate more consistently. Then, the principal aggregates the submissions $\{a_1,\ldots,a_N\}$ and evaluates them via a global utility $U:\mathcal{A}\to\mathbb{R}$  aligned with user preferences. It is shaped by implicit or explicit user instructions via the prompts, with the principal acting as a proxy for the user, selecting the response that best aligns with the user’s intended objective.
After evaluation, each agent $i$ receives a feedback $r_i \in \mathbb{R}$ determined by the relative ranking of $a_i$ (e.g., $r_i=+1$ for top rank, $r_i=-1$ for bottom rank, and intermediate values otherwise). This design encourages agents to explore diverse reasoning paths, but it also increases the risk of misalignment between their local objectives and the principal’s utility. To reconcile the two, each agent $i$’s reward is defined as,
$U_i(a_i) = r_i \cdot U_{y_i}(a_i)$, 
which combines its internal preference with the ranking feedback from the principal.
The ranking-based mechanism creates structural tension: agents must balance their internal reasoning with the principal’s evaluation, leading them to refine strategies that improve both their individual quality and their relative standing.
The overall process of ALIGN is illustrated in Figure~\ref{fig:principal_agent}.

% The utility functions of principal and agents reflect their respective preferences over possible outputs. For each agent \(i\), its internal utility function \(U_{y_i}(a)\) reflects the agent's internal preference for answer \(a\).
%  This preference is operationalized via \emph{self-consistency} \citep{wang2022self}: the agent generates multiple responses \(\{\omega_{i1}, \ldots, \omega_{iK}\}\) to the same input and defines
% \[
% U_{y_i}(a) := \mathbb{P}_{i} \left( a | t \right) \approx \frac{1}{K} \sum_{k=1}^K \mathbb{I}[\omega_{ik} = a],
% \]
% where the probability is estimated by the empirical frequency of \(a\) among the sampled responses. Intuitively, higher utility is assigned to answers the agent would generate more consistently. In contrast, the principal’s utility function \( U_x(a) \) reflects user-centered preferences. It is shaped by implicit or explicit instructions from the users and serves as an external evaluation criterion over the submitted responses. In this sense, the principal acts as a proxy for the user, selecting the response that best aligns with the user’s intended objective or value.
\begin{figure*}[htbp]
    \centering
  \includegraphics[width=\textwidth]{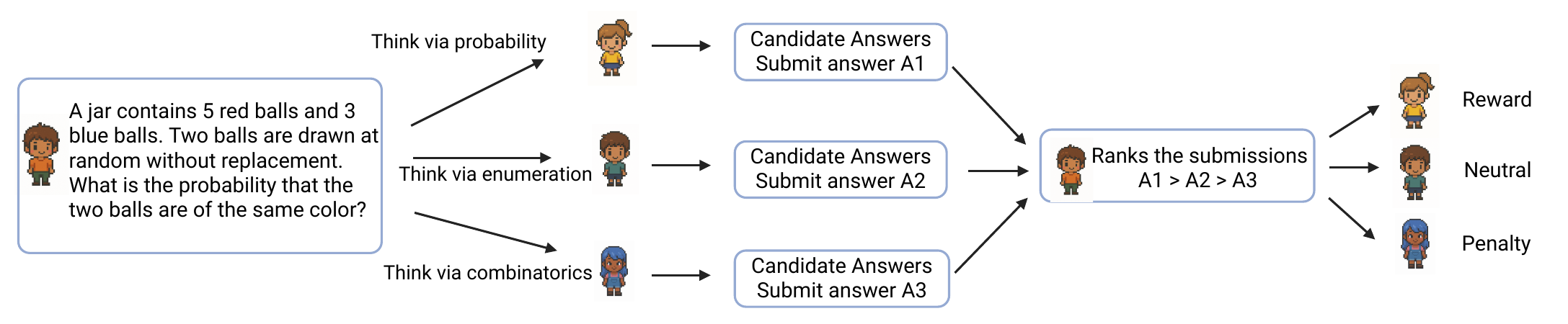} 
    \caption{Overview of ALIGN: multiple agents propose answers along distinct reasoning paths and receive utility from the principal’s ranking feedback. This setup incentivizes high-confidence outputs while promoting alignment with the principal.}
    \label{fig:principal_agent}
\end{figure*}

\subsection{Implementation of ALIGN}\label{sec:algorithm}

\begin{algorithm}[t]
\caption{ALIGN with Mirror Descent}
\label{algorithm}
\begin{algorithmic}[1]
    \STATE \textbf{Initialize:} For each agent \( i \in [N] \), set candidate set \( \mathcal{A}_i \), initialize utility estimates \( U_i^0(a) \gets 0 \) and policy \(\pi_i^0(a)\) for all \( a \in \mathcal{A}_i \), choose learning rate \( \eta > 0 \).
    \FOR{each round \( t = 1, 2, \ldots \)}
        \FOR{each agent \( i \in [N] \) (in parallel)}
            \STATE Sample answer \( a_i^t \sim \pi_i^t \) and submit \( a_i^t \) to principal
        \ENDFOR
        \STATE Principal ranks the submitted answers and provides feedback \(r_i^t\)
        % \(
        % a^* = \arg\max_{a_i^t} U(a_i^t)
        % \)
        \FOR{each agent \( i \in [N] \)}
            \FOR{each candidate answer \( a \in \mathcal{A}_i \)}
                \STATE Update utility:\\
                \quad $U_i^t(a) \gets U_i^{t-1}(a) +  r_i^tU_{y_i}(a,a^t_{-i})$
            \ENDFOR
            \STATE Compute policy: \\
            \quad $\pi_i^t(a) \propto \exp\left\{ \eta U_i^{t-1}(a) \right\}, \quad \forall a \in \mathcal{A}_i$
        \ENDFOR
    \ENDFOR
\end{algorithmic}
\end{algorithm}

In the implementation, ALIGN consists of the following steps. (i) Given a task \( t \in \mathcal{T} \), each agent generates a set of candidate answers. (ii) Each agent selects one answer to submit, according to a policy that defines a probability distribution over its candidates, estimated via self-consistency from repeated sampling. (iii) The principal evaluates and ranks the submitted answers according to its own utility. (iv) This ranking serves as feedback, where each agent receives a scalar utility based on the relative position of its answer. (v) Each agent updates its policy based on this feedback, with the goal of improving future performance while competing against others.

For agent \( i \), we define a policy \( \pi_i \) as a probability distribution over its candidate set \( \mathcal{A}_i \), where \( \pi_i(a) \) denotes the probability of selecting an answer \( a \in \mathcal{A}_i \). The agent’s objective is to adaptively update this policy to increase its expected cumulative utility, given the outcomes of prior interactions.
We adopt a mirror descent update rule, which adjusts the policy by shifting probability mass toward answers with higher observed utility, while maintaining exploration through regularization \citep{duvocelle2023multiagent,jacob2022modeling}. Specifically, mirror descent provides a principled framework for updating distributions over actions, and in our setting corresponds to an exponential weighting scheme based on utility feedback \citep{shalev2012online}. When using negative entropy as the mirror map, the resulting update recovers the classical Hedge algorithm from online learning \citep{littlestone1994weighted}. The full procedure is described in Algorithm~\ref{algorithm}.

\section{Theoretical Guarantees}\label{sec:theory}
\subsection{Performance Guarantees for ALIGN}
\label{sec:delegation theory}

We establish that multi-agent delegation can yield better performance than the single-agent setting. To enable a fair comparison, we ensure both settings have equal total access to candidate answers. In the single-agent case, the agent draws \(k\) answers from a distribution \(D\) to form its candidate set. In the multi-agent case, we consider  \(N \geq 2\) agents, where each agent \(i\) independently draws \(k_i\) samples from \(D\), with the same total number of samples: \(\sum_{i=1}^N k_i = k\). We assume all agents share the same internal utility function, \(U_{y_i} = U_y\), and that the principal applies the same evaluation function \(U\) across both settings. Under these conditions, any performance improvement arises solely from delegation rather than unequal information.

% \begin{assumption}[Pareto-optimal play]\label{assump:pareto}
% If an agent has two solutions $\omega$ and $\omega'$ such that
% \[
% U_x(\omega) \geq U_x(\omega') \quad \text{and} \quad U_y(\omega) \geq U_y(\omega'),
% \]
% and at least one of these inequalities is strict, then the agent does not select $\omega'$ to the principal.
% \end{assumption}

% \begin{assumption}[Symmetric agents]\label{assp:symmetric}
% We say that the agents are \emph{symmetric} if each agent provides the same number of answers and utility of each answer follows the same distribution \(D\).
% \end{assumption}

% \begin{assumption}\label{assp:corr}
% For any agent $i \in [n]$, the utility functions $U_x(\cdot)$ and $U_{y_i}(\cdot)$ are not negatively correlated, i.e.,
% \[
% \mathrm{Corr}\big(U_x(\omega), U_{y_i}(\omega)\big) \ge 0, \quad \text{for } \omega \sim D_i.
% \]
% \end{assumption}

\begin{assumption}\label{assump:structure}
\textbf{(i) Pareto-optimal play.} If an agent has two candidate answers \( \omega \) and \( \omega' \) such that the principal’s utility \( U(\omega) \geq U(\omega') \) and the agent’s utility \( U_{y_i}(\omega) \geq U_{y_i}(\omega') \), with at least one inequality being strict, then the agent does not submit \( \omega' \) to the principal.

\textbf{(ii) Symmetric agents.} All agents generate the same number of candidate answers, and the utility of each answer follows the same distribution \( D \).

\textbf{(iii) Non-negative alignment.} The principal's utility  \( U(\cdot) \) are not negatively correlated with any agent's utility \( U_{y_i}(\cdot) \), i.e.,
$\mathrm{Corr}\big(U(\omega), U_{y_i}(\omega)\big) \ge 0,  \text{ for } \omega \sim D$.
\end{assumption}

\noindent
Part (i) of Assumption~\ref{assump:structure} enforces Pareto-optimal play: if one answer is at least as good for both the agent and the principal, and strictly better for at least one, then the agent will not choose the inferior one. 
Part (ii) ensures that all agents operate under comparable conditions. Each agent generates the same number of candidate answers and submits one to the principal, so no agent gains an advantage from producing more options. By using LLMs of similar capacity with identical sampling procedures, the distribution of generated candidates is symmetric across agents. 
Part (iii) rules out adversarial behavior by ensuring that when the principal values an answer, agents do not systematically devalue it. 

A detailed discussion of the necessity of Assumption~\ref{assump:structure} is provided in Appendix~\ref{appendix:necessity assumptions}. Without enforcing part (ii), a misaligned “super agent” could strategically submit low-utility answers that the principal is forced to accept, driving the principal’s expected utility arbitrarily close to zero even when substantially better answers exist. Without part (iii), when agents’ utilities are negatively correlated with that of the principal, agents act selfishly and adversarially, submitting answers that undermine the principal’s objective. We also empirically verify parts (i) and (iii) in Section~\ref{sec:assump validation}.

% \begin{theorem}\label{thm:multi-agent adv}
% Given a single-agent problem $P$ and its multi-agent correspondence $P'$, 
% for any mechanism $M$ under $P$, there exists a multi-agent single proposal mechanism $M'$ under $P'$ such that, 
% at the Nash equilibrium of each mechanism, the principal's utility under $P'$ is at least that under $P$:
% \[
% U_x(M') \geq U_x(M).
% \]
% \end{theorem}

% The main take-away is that the principal can increase (or at least not decrease) her utility by recruiting more agents with similar utilities. Intuitively, in equilibrium multi-proposal and single-proposal are equivalent, and multiple agents each making a single proposal give the principal more chances to receive a highly valued solution, as more judgments lead to better outcomes.

% \begin{theorem}\label{thm:mspm-dominance}
% In the multi-agent setting, for any mechanism $M$ with an arbitrary equilibrium $\sigma$, there exists a multi-agent single proposal $M'$ such that for any equilibrium $\sigma'$ of $M'$, for any observed solutions $\bar{\omega} \in \Omega^*$, and for any
% \(
% \omega \in F_{M,\sigma}(\bar{\omega})\) and \(\omega' \in F_{M',\sigma'}(\bar{\omega}),
% \)
% we have
% \[
% x(\omega) \leq x(\omega').
% \]
% \end{theorem}

To evaluate the efficiency of mechanisms in strategic settings, 
we compare their outcomes with the expected value of the optimal result, 
denoted as $\mathbb{E}[U_{\max}]$, 
where $U_{\max}$ represents the principal's utility from the best candidate answer 
among those generated by the agent. A mechanism $M$, under agent strategies $\sigma$, is said to be 
$(\rho, \gamma)$-approximate if its expected outcome satisfies
\(
\rho\mathbb{E}[U_{M,\sigma}] + \gamma \;\geq\; \mathbb{E}[U_{\max}] ,
\)
where $\mathbb{E}[U_{M,\sigma}]$ denotes the principal's expected utility 
from the answer selected by $M$ under strategies $\sigma$, 
and $\rho$ and $\gamma$ are the multiplicative and additive 
approximation factors, respectively.
In our setting, we focus on prior-independent mechanism, where the mechanism has no prior knowledge of the distributions from which the agents’ answers are drawn. Additionally, we adopt an incomplete information framework among agents: while agents can be aware of the principal's utility function, 
they do not observe each other's submitted answers but only observe the ranking feedback returned by the principal.

% Building on this, we define the \emph{price of anarchy} (PoA) as a measure of the worst-case efficiency loss due to strategic behavior. Specifically, the multiplicative price of anarchy $\text{PoA}_m$ is the smallest $\rho$ such that the mechanism is $(\rho, 0)$-approximate under \textbf{every} Nash equilibrium. Similarly, the additive price of anarchy $\text{PoA}_a$ is the smallest $\gamma$ such that the mechanism is $(1, \gamma)$-approximate under all equilibria. 
% In contrast, the \emph{price of stability} (PoS) captures the best-case performance at equilibrium: the multiplicative price of stability $\text{PoS}_m$ is the smallest $\rho$ such that there exists \textbf{some} Nash equilibrium under which the mechanism is $(\rho, 0)$-approximate, and the additive price of stability $\text{PoS}_a$ is similarly defined for $(1, \gamma)$-approximation. 

% not exact
\begin{theorem}\label{thm:multi-agent-delegation}
Consider a single-agent problem $P$ and its multi-agent correspondence $P'$ with $N$ agents. Then under Assumption \ref{assump:structure}, we have,
\begin{itemize}
    \item[(a)] For any mechanism $M$ under $P$, there exists a multi-agent single-proposal mechanism $M'$ under $P'$ such that, at the Nash equilibrium of each mechanism,
\(
U(M') \;\geq\; U(M),
\)
where $U(\cdot)$ denotes the principal's utility function.
\item[(b)] When each agent $i$ generates candidate answers independently 
and the principal utility of these candidates follow $\mathcal{U}[-1,1]$, 
and the agent is willing to tolerate a utility loss of at most $2\varepsilon$ 
relative to its optimal candidate. 
That is, instead of always selecting the answer that maximizes its own utility 
$U_i^{\max}$, the agent may strategically submit any answer whose utility is at least 
$U_{i,\max} - 2\varepsilon$.
As a result, a \(2\varepsilon\)-approximate Bayes-Nash equilibrium can be achieved and the expected utility attained by the principal satisfies
$\mathbb{E}[U_{M',\sigma'}] = \mathbb{E}[U_{\max}]$,
where 
\(
\varepsilon \leq 1 - e^{- \frac{N^2}{2(N-1)^2}}.
\) 
\end{itemize}
\end{theorem}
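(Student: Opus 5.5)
For part (a) I would use a simulation argument in the style of delegation-mechanism papers (Kleinberg–Kleinberg style "single proposal mechanisms"). First, I would invoke the revelation-type reduction from the aligned delegation literature: in the single-agent problem $P$, any mechanism $M$ with its equilibrium is outcome-equivalent to a single-proposal mechanism. This mechanism is specified by an acceptable set $R\subseteq\mathcal{A}$: the agent proposes one answer, and the principal accepts it iff it lies in $R$. By Assumption~\ref{assump:structure}(i), the agent submits its $U_y$-maximal candidate within $R$. Next I would build $M'$ in $P'$. Each agent faces the same acceptable set $R$, the principal takes the $U$-best of the accepted submissions, and ranking feedback $r_i$ is positive only when an agent's submission is accepted. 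Then I would couple the two problems. The $k$ i.i.d.\ draws from $D$ in $P$ are partitioned into the $N$ pools of sizes $k_i$ (valid by Assumption~\ref{assump:structure}(ii), since $\sum_i k_i=k$). Agent $i$'s submission is its $U_y$-best point of $R\cap\mathcal{A}_i$. The single agent's choice $\omega^*$ lies in some pool $j$ and is also agent $j$'s $U_y$-best point there. Hence $\omega^*$ is among the submissions in $P'$, and the principal's pick $\max_i U(a_i)\ge U(\omega^*)$ holds pointwise. Taking expectations gives $U(M')\ge U(M)$. I still need to check that submitting the $U_y$-best element of $R$ remains an equilibrium when the reward is $r_iU_y$. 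Assumption~\ref{assump:structure}(iii) is the ingredient here: with non-negative correlation, raising $U_y$ does not, in expectation, lower the rank, so no agent profits from deviating to a lower-$U_y$ element.

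For part (b), the target is that the principal attains $\mathbb{E}[U_{\max}]$. So I would design $M'$ to rank submissions by $U$ and give reward $r_i=+1$ only to the top-ranked submission. Then I would show that the profile in which every agent submits its $U$-maximal candidate is a $2\varepsilon$-approximate Bayes–Nash equilibrium. If all agents submit their $U$-best candidates, the principal receives the global $U$-best among all $k$ draws, which is exactly $U_{\max}$. The work is to bound the deviation gain. Agent $i$'s payoff is $\Pr(\text{top rank})\cdot U_y(a_i)$. By switching from its $U$-best candidate to its $U_y$-best one, the agent gains at most the $U_y$ gap (at most $2$ on the $[-1,1]$ scale). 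However, it reduces its probability of winning, which for uniform $\mathcal{U}[-1,1]$ utilities and i.i.d.\ opponents is $F(u)^{(N-1)k_i}$-type, with $F(u)=(u+1)/2$. I would write the expected gain of the best deviation as an integral over the order statistics of the uniform distribution and bound it by $2\varepsilon$.

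The main obstacle is producing the specific constant $\varepsilon\le 1-e^{-N^2/(2(N-1)^2)}$. My first attempt would be to bound the deviation gain by $1-(1-p)^{m}$, where $p$ is the probability the deviating answer is still ranked first. I would then use $\log(1-x)\ge -x-x^2/(1-x)$, or $(1-1/x)^{x}\ge e^{-1}$-type estimates, with $m$ depending on $N$ through the ratio $N/(N-1)$. The exponent $N^2/(2(N-1)^2)$ suggests squaring a term like $\tfrac{N}{N-1}$ arising from comparing one agent's maximum against the maximum of the other $N-1$ agents. I would expect to tune the cutoff where the agent becomes indifferent so that this expression falls out.
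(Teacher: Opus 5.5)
\textbf{Part (a).} You follow the same route the paper takes by citation: the Kleinberg--Kleinberg reduction of Lemma~\ref{lemma:single agent}, then a multi-agent single-proposal mechanism with a common eligible set $R$ and the pool-splitting coupling behind Theorem E.3 of \citet{shin2023delegating}. However, the step you flag as still needing a check is false as stated.

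\begin{itemize}
\item Submitting the $U_y$-best element of $R\cap\mathcal{A}_j$ is generally \emph{not} an equilibrium once rewards depend on rank. Agent $j$ may hold $\omega^*$ together with some $\omega'\in R$ of slightly lower $U_y$ but much higher $U$. Switching to $\omega'$ raises its winning probability enough to pay off. This is exactly why, in part (b), an agent's best response maximizes $\{[U(\omega)+1]/2\}^{k(N-1)}U_y(\omega)$ rather than $U_y$.
\item Assumption~\ref{assump:structure}(iii) is a population-level statement about $D$. It says nothing about a realized pool, so it cannot rule out such a deviation.
\item The fix is to drop that claim and use Assumption~\ref{assump:structure}(i) instead. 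Since $\omega^*$ is $U_y$-maximal in $R\cap\mathcal{A}_j$, any $\omega'\in R\cap\mathcal{A}_j$ with $U(\omega')<U(\omega^*)$ is Pareto-dominated by $\omega^*$ and is never submitted. Hence at \emph{any} equilibrium, agent $j$'s submission has $U\ge U(\omega^*)$, and your pointwise inequality goes through.
\end{itemize}

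\textbf{Part (b).} Here the gap is genuine. You have the right profile $\sigma^x$ and the right winning probability, but no mechanism that produces $\varepsilon$. The integral of the deviation gain you describe needs the joint law of $(U,U_y)$, which the statement leaves unspecified beyond non-negative correlation. You also never use Assumption~\ref{assump:structure}(iii) in this part.

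The missing idea is the alignment event
\[
E_i=\bigl\{\argmax_{\omega\in\mathcal{A}_i}\{[U(\omega)+1]/2\}^{k(N-1)}U_y(\omega)=\argmax_{\omega\in\mathcal{A}_i}U(\omega)\bigr\}, \qquad E=\bigcap_i E_i .
\]
\begin{itemize}
\item On $E$, $\sigma_i^x$ is weakly dominant against $\sigma^x_{-i}$.
\item Off $E$, payoffs are bounded, which gives $U_i(\sigma_i^x,\sigma^x_{-i})\ge U_i(\sigma_i',\sigma^x_{-i})-2\mathbb{P}[E^c]$. So $\sigma^x$ is a $2(1-\alpha)$-approximate BNE whenever $\mathbb{P}[E]\ge\alpha$.
\item The constant is therefore not tuned from an indifference cutoff. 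It is a lower bound on $\mathbb{P}[E]$: $\alpha_0=e^{-N^2/(2(N-1)^2)}$ under independence (Lemma~\ref{lemma:independent bne}).
\item This bound transfers to the positively correlated case via the coupling of Lemma~\ref{lemma:correlation}. Conditional on the $U$-values, positive dependence makes $U_y(\omega^*)$ stochastically larger and the other $U_y$-values stochastically smaller. That can only help each condition $U_y(\omega^*)/U_y(\omega_j)\ge B_j$, where the thresholds $B_j$ depend only on the $U$-values.
\end{itemize}
Without this reduction you obtain neither the stated $\varepsilon$ nor coverage of the correlated setting the theorem claims.
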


Part (a) of this theorem shows that the principal can obtain a higher-utility answer by recruiting more comparable agents. Intuitively, when multiple agents each submit one answer, the principal has more chances to receive a better answer, as independent judgments may increase the likelihood of a better outcome.
Part~(b) further shows that, if each agent is not purely self-interested and is willing to consider answers with utility at least \(U_{i,\max}-2\varepsilon\), then the principal can obtain the best answer with utility \(\mathbb{E}[U_{\max}]\).
Unlike prior work that assumes independent utilities \citep{shin2023delegating}, our setting involves a shared problem between the principal and agents, making independence unrealistic. We therefore extend the analysis to positively correlated utilities (Assumption~\ref{assump:structure}), which more accurately capture the structural alignment between LLM agent and principal objectives.

\subsection{Regret Analysis}\label{sec:convergence}
We analyze the learning dynamics of our algorithm in the multi-agent setting and establish regret guarantees for each agent. Regret is evaluated with respect to each agent’s own utility and measures the gap between the utility the agent actually obtains and the utility it could have obtained by following the best fixed policy in hindsight. Specifically, we show that each agent achieves sublinear regret over time when running Algorithm~\ref{algorithm}, so its cumulative utility asymptotically matches that of its best fixed policy.

% To do so, we first reformalize the utility function. The actual utility received by each agent depends on both the principal's feedback and the agent's internal utility. Specifically, in round \( t \), the agent receives a scalar feedback signal \( r_i^t \in \mathbb{R} \) based on the ranking of its submitted answer, and the utility associated with action \( a \in \mathcal{A}_i \) is defined as
% \(
% U_i^t(a) = r_i^t \cdot U_{y_i}(a),
% \)
% where \( U_{y_i}(a) \) denotes the agent's internal valuation of action \( a \). This utility formulation captures the interaction between the external evaluation signal and the agent’s own reasoning preferences.

For agent \( i \) and any candidate answers \( a \in \mathcal{A}_i \), we define the regret after \( T \) rounds as the average difference between the cumulative utility the agent would have received by consistently selecting an answer \( a \), and the utility actually obtained by following the sequence of mixed strategies \( \{ \pi_i^t \}_{t=1}^T \) \citep{cai2023doubly}.
Formally, the regret is defined as
\[
R_i^T(a) = \max_{\pi} \left\{ \sum_{t=1}^T [U_i(\pi, a_{-i}^t) - U_i(\pi_{i}^t, a_{-i}^t)] \right\},
\]
where $a_{-i}^t$ denotes the answers chosen by all agents other than $i$ at round $t$, and $u_i(\pi_i^t, a_{-i}^t) = \sum_{a' \in \mathcal{A}_i} \pi_i^t(a')\, U_i(a', a_{-i}^t)$ is the expected utility of agent $i$ under their mixed strategy $\pi_i^t$. 
The regret quantifies, for each fixed answer $a$, how much worse the agent performed on average compared to always selecting that answer. The agent is said to have \emph{no regret} if $\max_{a \in \mathcal{A}_i} R_i^T(a)/T \to 0$ as $T \to \infty$ \citep{bubeck2012regret}.
More generally, one can express the cumulative regret with respect to any fixed mixed strategy $\pi$, in which case the following upper bound holds:

% \begin{theorem}
% Upon running Algorithm~\ref{algorithm} for $T$ iterations in a multi agents general-sum game, the policy $\bar{\pi}_i^T = \frac{1}{T}\sum_{t=1}^T\pi_i^t$ is at a distance
% \[
% D_{\mathrm{KL}}(\bar{\pi}_i^T \,\|\, \tau_i) \leq \frac{1}{\lambda_i} \left( \frac{R_i^T}{T} + D_i \right),
% \]
% where $D_i$ is any upper bound on possible rewards for Player~$i$, $\lambda_i$ is the regularization term and \(
% R_i^T := \max_{\pi^* \in \Delta(\mathcal{A}_i)} \sum_{t=1}^{T} U_i^t(\pi^*) - \sum_{t=1}^{T} U_i^t(\pi_i^t)
% \) is the regret for agent \(i\). 
% \end{theorem}

\begin{theorem}\label{thm:no regret}
Let agent \( i \) follow Algorithm~\ref{algorithm} with learning rate \( \eta = 1/\sqrt{T} \). Then for any policy \( \pi \), the cumulative regret over \( T \) rounds satisfies
$\sum_{t=1}^T [ U_i(\pi, a_{-i}^t) - U_i(\pi_i^t, a_{-i}^t) ]
\leq ( \frac{1}{4} + D_{\mathrm{KL}}(\pi \,\|\, \pi_i^0) ) \sqrt{T}$.
\end{theorem}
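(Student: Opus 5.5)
The plan is to run the standard Hedge/mirror-descent analysis with the negative-entropy regularizer, i.e.\ the potential-function argument built on the KL divergence to the comparator $\pi$.

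\textbf{Setup.} Write the per-round utility vector of agent $i$ as $g^t(a) := r_i^t U_{y_i}(a,a_{-i}^t)$ for $a\in\mathcal{A}_i$. The cumulative utility is then $U_i^t = U_i^{t-1}+g^t$, and the policy of Algorithm~\ref{algorithm} is $\pi_i^{t}\propto \pi_i^0\exp\{\eta\sum_{s<t} g^s\}$. With uniform $\pi_i^0$ this is exactly the stated update, and with a general prior $\pi_i^0$ it is the natural reading of the mirror step. Since $U_i(\pi,a_{-i}^t)=\langle \pi,g^t\rangle$ is linear in $\pi$, the regret against $\pi$ is $\sum_t\langle \pi-\pi_i^t,g^t\rangle$. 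We need boundedness: $r_i^t\in[-1,1]$ by the ranking feedback, and $U_{y_i}\in[0,1]$ since it is a self-consistency frequency. Hence $g^t(a)\in[-1,1]$, and more precisely, for each $t$ all coordinates of $g^t$ share the sign of $r_i^t$ and lie in an interval of length at most $1$.

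\textbf{One-step inequality.} Use the potential $\Phi_t = D_{\mathrm{KL}}(\pi\|\pi_i^t)$. A direct computation from the exponential-weights form gives
\[
\Phi_{t+1}-\Phi_t = -\eta\langle \pi,g^t\rangle + \log \sum_a \pi_i^t(a)e^{\eta g^t(a)}.
\]
Bound the log-moment-generating term by Hoeffding's lemma: a random variable with mean $\langle\pi_i^t,g^t\rangle$ taking values in an interval of length at most $1$ has
\[
\log\mathbb{E}e^{\eta X}\le \eta\langle \pi_i^t,g^t\rangle+\eta^2/8.
\]
Therefore
\[
\langle \pi-\pi_i^t,g^t\rangle \le \frac{\Phi_t-\Phi_{t+1}}{\eta}+\frac{\eta}{8}.
\]

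\textbf{Telescoping.} Summing over $t=1,\dots,T$ and using $\Phi_{T+1}\ge 0$ gives regret at most $D_{\mathrm{KL}}(\pi\|\pi_i^0)/\eta + \eta T/8$. A small index shift between $\pi_i^0$ and $\pi_i^1$ may arise; it is harmless because $\pi_i^1=\pi_i^0$ when $U_i^0=0$. Plugging in $\eta=1/\sqrt T$ yields $(D_{\mathrm{KL}}(\pi\|\pi_i^0)+1/8)\sqrt T$, which is at most the claimed $(1/4+D_{\mathrm{KL}}(\pi\|\pi_i^0))\sqrt T$.

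\textbf{Fallback and main obstacle.} If the range of $g^t$ is only known to be $[-1,1]$, with length $2$, Hoeffding gives $\eta^2/2$. In that case I would instead use the cruder inequality $e^x\le 1+x+x^2$ for $|x|\le1$ together with $\log(1+y)\le y$. This gives an $\eta^2\sum_a\pi_i^t(a)g^t(a)^2$ term, which I would control with $|g^t|\le$ the range bound to reach the $1/4$ constant. The main obstacle is therefore pinning down the range of $g^t$ precisely enough to land the constant $1/4$. The structure of the argument itself is routine.
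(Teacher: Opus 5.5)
Your main argument is correct and takes a genuinely different route from the paper's; it even gives the sharper constant $1/8$.

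\textbf{Comparison with the paper's route.} The paper writes the update as entropy-regularized FTRL and derives a first-order optimality condition. It applies the Bregman three-point identity to $\langle U_i^t,\pi-\pi_i^{t+1}\rangle$. It then controls the stability term $\langle U_i^t,\pi_i^{t+1}-\pi_i^t\rangle$ with Young's inequality and Pinsker, using only $\|U_i^t\|_\infty\le 1$ at the end.

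You instead track $D_{\mathrm{KL}}(\pi\|\pi_i^t)$ through the log-partition function and bound the one-step term with Hoeffding's lemma. This uses the per-round \emph{range} of $g^t$ rather than its sup-norm, and that is exactly what the constant needs. In the paper's route the three-point identity actually carries $-D_{\mathrm{KL}}(\pi_i^{t+1}\|\pi_i^t)$, not $+$. Young plus Pinsker can absorb that term only at a cost of $\tfrac{\eta}{2}\|U_i^t\|_\infty^2$ per round, i.e.\ constant $1/2$. Doing better requires shifting the per-round utility by a constant, which gives $\tfrac{\eta}{8}\bigl(\max_a g^t(a)-\min_a g^t(a)\bigr)^2$. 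That is your range observation again.

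Your length-one claim is justified by Algorithm~\ref{algorithm}: a single scalar $r_i^t\in[-1,1]$ multiplies $U_{y_i}(a)\in[0,1]$. It would also survive an action-dependent rank. Self-consistency frequencies of distinct answers sum to at most $1$, so $|g^t(a)-g^t(a')|\le U_{y_i}(a)+U_{y_i}(a')\le 1$.

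\textbf{The fallback does not work; drop it.} The inequality $e^x\le 1+x+x^2$ yields a per-round term $\eta^2\sum_a\pi_i^t(a)g^t(a)^2\le\eta^2$, i.e.\ constant $1$. Moreover, no argument can reach $1/4$ from $\|g^t\|_\infty\le 1$ alone. Take two actions, uniform $\pi_i^0$, and $\eta=1/\sqrt T$. Let $g^t=(\epsilon_t,-\epsilon_t)$, with $\epsilon_t$ alternating $+1,-1$ for $T-m$ rounds and then $\epsilon_t=+1$ for $m\approx\sqrt T\log T$ rounds. The regret against $\pi=\delta_1$ is $\sum_t 2\pi_i^t(2)\epsilon_t\ge \tfrac{T-m}{2}\tanh\eta+\tfrac{1}{\eta}\bigl(\ln 2-\ln(1+e^{-2\eta m})\bigr)=(\ln 2+\tfrac12)\sqrt T-O(\log T)$. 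This exceeds $(\ln 2+\tfrac14)\sqrt T$ for large $T$. So the range-one structure is not a convenience; it is what makes the stated bound true.
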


Theorem~\ref{thm:no regret} shows that when each agent follows Algorithm~\ref{algorithm} with learning rate \( \eta = 1/\sqrt{T} \), the cumulative regret after \( T \) rounds is upper bounded by \( \mathcal{O}(\sqrt{T}) \). The factor in the bound depends on two components: a fixed coefficient \( 1/4 \) from the optimization dynamics, and the KL divergence between the comparator policy and the initial policy. As a result, the average regret  $\sum_{t=1}^T \left[ U_i(\pi, a_{-i}^t) - U_i(\pi_i^t, a_{-i}^t) \right]/T$ vanishes as \( T \to \infty \), which ensures that each agent learns to perform competitively over time. This guarantee is consistent with standard results in online learning and mirror descent algorithms \citep{cai2023doubly,jacob2022modeling}.

\subsection{Convergence Guarantees}
We now analyze the convergence behavior of agents to a Nash
equilibrium. In our setting, a Nash equilibrium corresponds to a stable outcome in which each agent adopts the best policy over candidate answers that maximizes its expected utility, given the fixed policies of all other agents \citep{kreps1989nash, holt2004nash}. Formally, let \( \pi = (\pi_1, \ldots, \pi_N) \) denote the joint policy profile, where each \( \pi_i  \) is a probability distribution over the candidate set \( \mathcal{A}_i \) of agent \(i\). Then \( \pi^* = (\pi_1^*, \ldots, \pi_N^*) \) is a Nash equilibrium if for all \( i \in N \) and any alternative policy \( \hat{\pi}_i  \),
\[
U_i(\pi_i^*, \pi_{-i}^*) \geq U_i(\hat{\pi}_i, \pi_{-i}^*).
\]
At equilibrium, no agent can improve its expected utility by changing its policy alone.

\begin{theorem}\label{thm:convergence}
For any $T \in \mathbb{N}$, $\eta > 0$, and $\delta \in (0,1)$, define the quantity
\[
\xi^T(\delta) := \frac{\sum_{i=1}^N R_i^T}{T} + N \sqrt{ \frac{8}{T} \log \left( \frac{N \max_i |\mathcal{A}_i|}{\delta} \right) }.
\]
For \(N\)-agents delegation games, upon running Algorithm~\ref{algorithm} for any $T$ iterations with learning rate $\eta > 0$, the average policies $\bar{\pi}_i^T = \frac{1}{T} \sum_{t=1}^T \pi_i^t$ of each agent form a $\xi^T(\delta)$-approximate Nash equilibrium with probability at least $1 - \delta$, for any $\delta \in (0,1)$.
\end{theorem}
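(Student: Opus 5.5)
The plan is the standard no-regret-to-equilibrium argument, combined with a martingale concentration step and a union bound. Fix an agent $i$ and a deviation $a \in \mathcal{A}_i$. The goal is to bound
\[
U_i(a,\bar{\pi}_{-i}^T) - U_i(\bar{\pi}_i^T,\bar{\pi}_{-i}^T),
\]
where the averaged profile is understood as in the usual reduction: the empirical average of the product profiles $\pi^t$ played by Algorithm~\ref{algorithm}. Under this reading, $U_i(a,\bar\pi_{-i}^T) = \frac1T\sum_t U_i(a,\pi_{-i}^t)$, and similarly for the second term, by multilinearity of expected utility.

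The argument has three steps.

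\textbf{Step 1 (swap in realized actions).} I decompose
\begin{align*}
\frac1T\sum_{t=1}^T \bigl[U_i(a,\pi_{-i}^t)-U_i(\pi_i^t,\pi_{-i}^t)\bigr]
&= \frac1T\sum_{t=1}^T \bigl[U_i(a,a_{-i}^t)-U_i(\pi_i^t,a_{-i}^t)\bigr] \\
&\quad + \frac1T\sum_{t=1}^T X_t^{i,a},
\end{align*}
where
\[
X_t^{i,a} := \bigl[U_i(a,\pi_{-i}^t)-U_i(a,a_{-i}^t)\bigr]-\bigl[U_i(\pi_i^t,\pi_{-i}^t)-U_i(\pi_i^t,a_{-i}^t)\bigr].
\]
The first sum is at most $R_i^T/T$ by the definition of regret, with the max over $\pi$ covering pure $a$. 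Theorem~\ref{thm:no regret} gives an explicit bound on it when $\eta=1/\sqrt T$, but for the statement with general $\eta$ I keep $R_i^T$ symbolic.

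\textbf{Step 2 (concentration).} The policies $\pi_i^t$ and $\pi_{-i}^t$ are $\mathcal{F}_{t-1}$-measurable, since they depend only on past feedback. The action $a_{-i}^t$ is drawn from $\pi_{-i}^t$ independently given $\mathcal{F}_{t-1}$. Hence $\mathbb{E}[X_t^{i,a}\mid\mathcal{F}_{t-1}]=0$, and $\{X_t^{i,a}\}$ is a martingale difference sequence.

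Utilities are bounded: $|r_i|\le 1$ and $U_{y_i}\in[0,1]$, so $|U_i|\le1$ and $|X_t^{i,a}|\le 4$. I will need to be slightly careful to get a range compatible with the constant $8$. Writing $X_t$ as a difference of two centered terms each in an interval of length $2$, Azuma--Hoeffding with increments of range $c$ gives
\[
\Pr\Bigl(\tfrac1T\textstyle\sum_t X_t\ge s\Bigr)\le \exp\bigl(-2Ts^2/c^2\bigr).
\]
Choosing $c=4$ yields $s=\sqrt{(8/T)\log(1/\delta')}$, which matches the stated form.

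\textbf{Step 3 (union bound and summation).} I take a union bound over all $i\in[N]$ and $a\in\mathcal{A}_i$, at most $N\max_i|\mathcal{A}_i|$ events, with $\delta'=\delta/(N\max_i|\mathcal{A}_i|)$. With probability at least $1-\delta$, every agent's deviation gain is at most
\[
R_i^T/T+\sqrt{(8/T)\log(N\max_i|\mathcal{A}_i|/\delta)}.
\]
Since the best deviation over mixed strategies is attained at a pure action, this covers all $\hat\pi_i$. Summing the per-agent gaps over $i$ bounds the maximum gap by the sum, which gives the $\xi^T(\delta)$ of the statement (the factor $N$ and the $\sum_i R_i^T$).

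The main obstacle is the interpretation of ``average policies form an approximate Nash equilibrium.'' The product of marginal averages $\bar\pi_i^T$ is not in general the time-average of the product profiles in an $N$-player general-sum game, so the multilinearity identity used in Step 1 fails for the literal product $\prod_i\bar\pi_i^T$. I would first state the result for the empirical average joint profile $\frac1T\sum_t \pi^t$, which is a coarse-correlated-equilibrium-type notion and exactly what the decomposition controls. I would then note that it coincides with the product of averages whenever the other agents' contributions enter $U_i$ additively or the game is two-player zero-sum. If a literal product-form claim is required, I would restrict to that case or absorb the discrepancy into the bound.
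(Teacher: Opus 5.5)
Your Steps 1--3 are the paper's argument. The paper's $w^t$ is your $X_t^{i,a}$ with a general comparator $\pi^*$ in place of $a$, and it is followed by the same Azuma--Hoeffding step, a union bound over pure deviations and over agents, and the substitution $\delta\mapsto\delta/N$. The routes differ only at the last step, and your objection there is correct.

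To reach the product profile $(\bar\pi_1^T,\dots,\bar\pi_N^T)$, the paper asserts two identities: $\frac{1}{T}\sum_t U_i(\pi_i^*,\pi_{-i}^t)=U_i(\pi_i^*,\bar\pi_{-i}^T)$ and $\frac{1}{T}\sum_t\sum_i U_i(\pi^t)=\sum_i U_i(\bar\pi_1^T,\dots,\bar\pi_N^T)$. It then bounds the largest gap by the sum of the nonnegative product-form gaps. The first identity needs $U_i(\pi_i^*,\cdot)$ to be linear in $\pi_{-i}$. That holds for $N=2$ but fails once $\pi_{-i}^t$ is a product of two or more factors. For example, take $U_1=\mathbb{I}[a_2=a_3]$ with agents $2,3$ playing $(A,A)$ and then $(B,B)$: the time average is $1$, but the product of averages gives $1/2$. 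The second identity needs $\sum_i U_i$ to be constant, i.e.\ a zero-sum game. These are the hypotheses of the two-player zero-sum result of Jacob et al.\ (2022) that the proof adapts, and the ranking-based game with $U_i=r_iU_{y_i}$ satisfies neither. So your proposal does not prove the statement as literally written, but neither does the paper's proof. What the argument actually supports is your approximate-CCE statement for $\frac{1}{T}\sum_t\pi^t$, which is all that no-regret play guarantees in general-sum games.

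Three corrections to your version.

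First, your remark on when the product form is recovered is off. Pairwise-additive (polymatrix) utilities rescue the deviation term, but $\frac{1}{T}\sum_t U_i(\pi_i^t,\pi_{-i}^t)$ is still an average of products $\pi_i^t\otimes\pi_j^t$. You also need $\sum_i U_i\equiv 0$ for these terms to cancel. Even in the two-player zero-sum case the joint average does not equal the product of averages; only the bound transfers, through the sum of gaps. Nor can the discrepancy be ``absorbed into the bound'': regret does not control it, and it need not vanish.

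Second, in the CCE version you cannot bound the largest gap by the sum of gaps, because CCE gaps can be negative (e.g.\ a half--half mixture of $(A,A)$ and $(B,B)$ in a coordination game). Instead, combine your per-agent bound $R_i^T/T+\sqrt{(8/T)\log(N\max_j|\mathcal{A}_j|/\delta)}$ with $R_j^T\ge 0$, which holds for Hedge. Write $S_T(a)=\sum_t U_i(a,a_{-i}^t)$ and $\pi_i^t(a)\propto\pi_i^0(a)e^{\eta S_{t-1}(a)}$. Jensen's inequality then gives $\sum_t\langle\pi_i^t,U_i(\cdot,a_{-i}^t)\rangle\le\frac{1}{\eta}\log\sum_a\pi_i^0(a)e^{\eta S_T(a)}\le\max_a S_T(a)$. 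This yields the sharper bound $\max_i R_i^T/T$ plus a single square-root term, which is at most $\xi^T(\delta)$.

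Third, your handling of the constant is sounder than the paper's. Its bound $|w^t|\le 2$ uses the invalid step $|X-Y|\le|X|-|Y|$; with $|U_i|\le 1$ the triangle inequality gives only $4$. Your use of a conditional range of length $4$ in Hoeffding--Azuma correctly recovers the factor $8$.
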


Theorem~\ref{thm:convergence} establishes that the empirical average of the policies produced by Algorithm~\ref{algorithm} converges toward equilibrium behavior. Specifically, the average strategy profile $(\bar{\pi}_1^T,\dots,\bar{\pi}_N^T)$ is guaranteed to be a $\xi^T(\delta)$-approximate Nash equilibrium with high probability. The error term $\xi^T(\delta)$ decomposes into two parts: (i) the cumulative regret $\sum_i R_i^T / T$, which vanishes sublinearly under Algorithm~\ref{algorithm}, and (ii) a concentration term of order $O(\sqrt{\log(N \max_i |\mathcal{A}_i|/\delta)/T})$ arising from standard martingale inequalities. Together, these imply that the approximation error decays at the rate $O(1/\sqrt{T})$, ensuring that no player can improve her long-run utility by more than $O(1/\sqrt{T})$ through unilateral deviation.

The equilibrium reflects the outcome of repeated learning dynamics where agents iteratively adapt their policies to better align with the principal’s feedback while preserving their own reasoning-based preferences. In this sense, each agent’s policy converges to a distribution over answers that balances two forces: being favored by the agent’s internal utility and being ranked highly by the principal. The result generalizes the convergence guarantees of \citet{jacob2022modeling} from two-player zero-sum games to general multi-agent aligned delegation settings.

\section{Experiments}
\label{sec:experiments}
We evaluate ALIGN on mathematical reasoning question-answering (QA) tasks. In this setup, a principal LLM provides ranking feedback, while three agent LLMs independently generate candidate answers and compete to obtain higher rankings.
 Code for all experiments is available at \url{https://anonymous.4open.science/r/aligned_delegation_algorithm-7122}.

\subsection{Experiment Setup}\label{Sec: Exp Setup}

\textbf{Models}. We adopt four open-sourced instruction-following LLMs to serve as agents in ALIGN: Mistral-8B-Instruct \citep{mistral2024ministral}, Zephyr-7B-Beta \citep{tunstall2023zephyrdirectdistillationlm} and Phi-3-Mini-4K-Instruct \citep{abdin2024phi}. These models have comparable parameters and are matched in capacity, satisfying Assumption~\ref{assump:structure}, while contributing architectural and training diversity to the agent pool. We use Qwen/Qwen2.5-7B-Instruct ~\cite{yang2025qwen3} as the principal because of its good alignment and consistent performance on all of our reasoning benchmarks. Unless otherwise specified, we set the learning rate $\eta = 0.1$ and 20 maximum iteration number for all experiments.

\textbf{Implementation Details.} In the candidate answer generation stage, we augment each agent with Monte Carlo Tree Search (MCTS) , performing 16 roll-outs with 5 maximum depth. In the delegation stage, a Llama2-7b model \citep{touvron2023llama} served as the principal, providing feedback via best-path masking and consistency checks to the agents. The prompts used throughout the system adhered to the format described in the work by \cite{guan2025rstar}. 

\textbf{Baselines.} We compare our delegation-based reasoning framework against three strong and representative baselines: 
\begin{itemize}
    \item \textbf{Few-shot Chain-of-Thought }\citep{wei2022chain} is a method that prompts a LLM with a few in-context exemplars illustrating intermediate reasoning steps before the final answer. This baseline reflects the model’s standalone reasoning ability without delegation.
    \item \textbf{Self-Consistency with Chain-of-Thought} (CoT-SC) \cite{wang2022self}, where we sampled the answers 16 times, employing majority voting for the selection of the answers.
    
    \item \textbf{rStar} \citep{guan2025rstar} is a self-play approach that improves reasoning through a generation and discrimination process. It first generates multiple reasoning paths and then uses a discriminator LLM to filter answers. In our implementation, the principal LLM fills in missing steps given earlier steps and scores candidate answers. We adopt a simplified variant that only verifies and scores the agents’ submitted answers.
%     \item \textbf{rStar generator @maj} (\cite{guan2025rstar}) is a component of rStar to generate multiple reasoning paths. We add the rStar discriminator for mutual-consistency filtering. The agents implement Monte Carlo
% Tree Search (MCTS) to five different ways to propose multiple candidate answers, and we select the final answer by majority voting across roll-outs.
\item To compare against a setting where the principal acts alone (without delegation), we also report the results selected by the principal in the final iteration; we refer to this setting as \textbf{Principal}.
\end{itemize}

\textbf{Datasets}.
We conduct our evaluations using mathematical reasoning datasets: GSM8K, MATH, and GSM-Hard \citep{cobbe2021gsm8k,lightman2023let, gao2022pal}. Each dataset presents unique challenges, allowing us to evaluate the proposed delegation framework across different types of reasoning skills and linguistic variation. Details are summarized below.
\begin{itemize}
 \item MATH \citep{lightman2023let} is a dataset of challenging competition-level mathematics problems ranging from high school to early college level. It covers diverse topics such as algebra, geometry, probability, and calculus, requiring models to demonstrate advanced mathematical reasoning and symbolic manipulation skills. 
\item GSM8K \citep{cobbe2021training} is a benchmark for grade-school math problems. It composed of 8.5K high-quality word problems that require multi-step arithmetic reasoning. The test set is about 1.3k. The dataset evaluates how well models can perform numerical computations and logical deductions in everyday language scenarios.
    
\item GSM-Hard \citep{gao2022pal} is a hard version of GSM8K math reasoning dataset. It replace the numbers in the questions of GSM8K with larger numbers that are less common.
\end{itemize}

\subsection{Experiment Results}

\textbf{ALIGN Improves Accuracy.} Results in Table~\ref{tab:accuracy} and Figure~\ref{fig:delegation_result} show that ALIGN consistently outperforms all
baselines across the evaluated datasets. This performance gain is due to two key factors. First, recruiting multiple agents expands the solution set and increases answer diversity, giving the principal a richer pool to choose from and a higher chance of including a high-quality reasoning path than few-shot CoT or the principal acting alone. Second, ALIGN connects agent utility to the principal’s ranking feedback, as agents are rewarded for producing answers that score well under the principal’s evaluation, which steers search toward faithful, well-justified solutions and induces productive competition rather than redundant exploration. 

\noindent\textbf{ALIGN Enables Competition Among Heterogeneous Agents.} Despite substantial differences in dataset difficulty and baseline accuracy, Table~\ref{tab:command} shows that most agents benefit from ALIGN. As expected, ALIGN induces competition among agents for receiving higher utilities. The ranking-based utility induces competition and a winner-takes-more effect: an initially stronger agent gains the most, while weaker agents might be mislead. For example, stronger agents with higher accuracy such as Mistral on Math and Phi on GSM-8K show some gains after competition. .

% \textbf{COMMAND Achieves Fast Convergence.} In our experiments, we set maximum iteration number 20, and early step when convergence no policy update.
\noindent\textbf{ALIGN Does Not Rely on Agent Heterogeneity.}
To directly test whether ALIGN’s gains come from multi-agent systems, we conduct a controlled experiment that disentangles the effects of agent heterogeneity. In this experiment, all agents are initially using the same backbone model, Mistral-7B-Instruct~\cite{chan2023chateval}, and differ only in decoding temperature (0.6, 0.8, and 0.9), while Qwen/Qwen2.5-7B-Instruct~\cite{yang2025qwen3} serves as the principal. The results in Table~\ref{tab:accuracy_same_model} report the accuracy for all baselines, while Table~\ref{tab:mistral_command} reports the accuracy before and after applying ALIGN for each agent. ALIGN continues to yield consistent accuracy improvements even when all agents share the same underlying model, indicating that heterogeneity and candidate count alone do not explain the observed gains.

\noindent\textbf{ALIGN is Robust to Principle Selection}
To evaluate whether ALIGN’s performance gains depend on the choice of principal model, we conduct an additional robustness study by replacing Qwen/Qwen2.5-7B-Instruct ~\cite{yang2025qwen3} with LLaMA-2-7B-Instruct \citep{touvron2023llama2openfoundation} as the principal, while keeping the same three open-source instruction-following LLMs as agents. For each benchmark, we randomly sample 300 questions. The results in Table~\ref{tab:Llama_command} report the accuracy of all baselines under this alternative principal and Table~\ref{tab:LLAMA_command2} report the individual performance of each agents befor and after ALIGN. Across all three benchmarks, ALIGN consistently achieves the best performance, outperforming rStar by 1.3 to 10.3 absolute accuracy points and substantially exceeding few-shot CoT. These findings demonstrate that ALIGN remains effective across different principal models, confirming that its performance gains are driven by delegation-based reasoning rather than reliance on a specific principal. Overall, ALIGN exhibits strong robustness to principal selection.

\begin{table}[ht]
\caption{Accuracy (\%) across benchmark datasets.
\textbf{Bold} indicates best performance.
FS-CoT: Few-shot CoT; SC: CoT-SC@maj16.}
\label{tab:accuracy}
\centering
\resizebox{.6\linewidth}{!}{%
\begin{tabular}{cccccc}
\toprule
Dataset & FS-CoT & SC & Principal & rStar & ALIGN \\
\midrule
Math     & 8.0  & 13.2 & 10.2 & 33.0 & \textbf{34.8} \\
GSM8K    & 40.4 & 62.8 & 33.5 & 57.1 & \textbf{64.3} \\
GSM-Hard & 17.7 & 24.1 & 15.6 & 22.8 & \textbf{24.6} \\
\bottomrule
\end{tabular}}
\end{table}

\begin{figure}[htbp]
    \centering
\includegraphics[width=.45\textwidth]{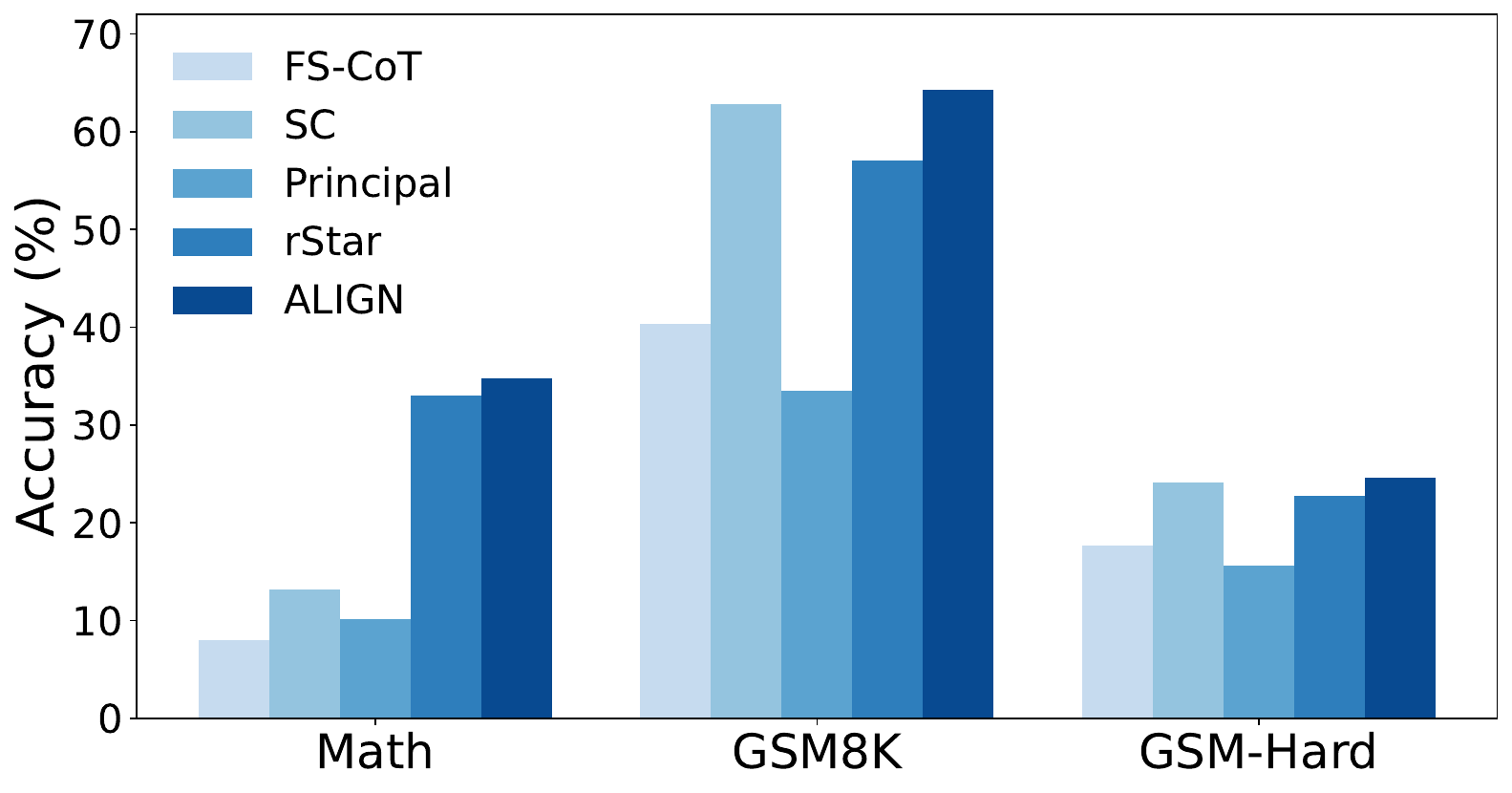} 
    \caption{Accuracy comparison across benchmark datasets for each method.}
    \label{fig:delegation_result}
\end{figure}

\begin{table}[ht]\caption{Accuracy (\%) of each model before and after applying ALIGN across three benchmark datasets.}
\label{tab:command}
\centering
\resizebox{.6\linewidth}{!}{%
\begin{tabular}{c|cc|cc|cc}
\toprule
\multirow{2}{*}{\textbf{Agent}} & \multicolumn{2}{c|}{\textbf{Math}} & \multicolumn{2}{c|}{\textbf{GSM8K}} & \multicolumn{2}{c}{\textbf{GSM-Hard}} \\
 & \textbf{Before} & \textbf{After} & \textbf{Before} & \textbf{After} & \textbf{Before} & \textbf{After} \\
\midrule
Mistral & 30.4 & 32.6 & 56.8 & 57.4 & 23.4 & 23.7 \\
Zephyr  & 28.8 & 29.4 &60.2  &60.1  & 19.3 & 19.5 \\
Phi  & 29.6& 30.6 &55.8  & 56.4  & 23.9 & 24.4 \\
\bottomrule
\end{tabular}}
\end{table}

\begin{table}[ht]\caption{Accuracy (\%) across benchmark datasets for the same agent. 
\textbf{Bold} indicates the best performance for each dataset.
FS-CoT: Few-shot CoT; SC: CoT-SC@maj16.}
\label{tab:accuracy_same_model}
\centering
\begin{tabular}{ccccc}
\toprule
Dataset &FS-CoT &SC & rStar & ALIGN \\
\midrule
Math &8.6 & 13.2&  33.0  &\textbf{34.4} \\
\bottomrule
\end{tabular}
\end{table}

\begin{table}[ht]
\caption{Accuracy (\%) of Mistral before and after applying ALIGN for MATH dataset.}
\label{tab:mistral_command}
\centering
\begin{tabular}{c|cc}
\toprule
\multirow{2}{*}{\textbf{Agent}} & \multicolumn{2}{c}{\textbf{Math}} \\
 & \textbf{Before} & \textbf{After} \\
\midrule
Mistra\_01(temp = 0.6) & 33.4 & 34.2 \\
Mistra\_02(temp = 0.8) & 30.4 & 31.4 \\
Mistra\_03(temp = 0.9) & 33.0 & 33.6 \\
\bottomrule
\end{tabular}
\end{table}

\begin{table}[t]
\centering
\caption{Accuracy (\%) across benchmark datasets for each method.
\label{tab:Llama_command}
\textbf{Bold} indicates the best performance for each dataset.}
\resizebox{.6\linewidth}{!}{%
\begin{tabular}{ccccc}
\toprule
Dataset & FS-CoT & Principal & rStar & ALIGN \\
\midrule
Math     & 8.0  & 4.8  & 26.8 & \textbf{29.1} \\
GSM8K    & 15.7 & 35.8 & 50.0 & \textbf{60.3} \\
GSM-Hard & 20.6 & 11.7 & 28.1 & \textbf{29.4} \\
\bottomrule
\end{tabular}}
\end{table}

\begin{table}[ht]
\centering
\caption{Accuracy (\%) of each model before and after applying ALIGN across three benchmark datasets. Bold highlights the best result for each dataset.}
\label{tab:LLAMA_command2}
\resizebox{.6\linewidth}{!}{%
\begin{tabular}{c|cc|c|cc|cc}
\toprule
\multirow{2}{*}{\textbf{Agent}} 
& \multicolumn{2}{c|}{\textbf{Math}} 
& \multirow{2}{*}{\textbf{Agent}} 
& \multicolumn{2}{c|}{\textbf{GSM8K}} 
& \multicolumn{2}{c}{\textbf{GSM-Hard}} \\
& \textbf{Before} & \textbf{After} 
& 
& \textbf{Before} & \textbf{After} 
& \textbf{Before} & \textbf{After} \\
\midrule
Mistral & 31.4 & 33.1 & Mistral & 39.0 & 39.3 & 27.2 & 27.5 \\
Zephyr  & 28.4 & 28.8 & Zephyr  & 60.7 & 61.0 & 22.2 & 21.9 \\
Falcon  & 18.7 & 16.1 & Phi     & 58.7 & 61.7 & 30.3 & 31.2 \\
\bottomrule
\end{tabular}}
\end{table}

\subsection{Validation for Assumption~\ref{assump:structure}}\label{sec:assump validation}
We empirically verify Assumption~\ref{assump:structure} in the experiments. Given the candidate answers generated by all the agents, we estimate the principal’s utility by submitting all agent's answers to the principal, collecting the feedback, and normalizing the scores.
For part (i), we denote a problem valid if no alternative answer yields strictly higher utility for both the principal and the agent than the submitted answers across all iterations. We then report the percentage of valid problems for each dataset and agent. As shown in Table~\ref{tab:validity}, this percentage is consistently around 90\% across agents and datasets, meaning that most of the problems satisfy the Pareto-optimal play in Assumption~\ref{assump:structure}.
For part (iii), we compute the Pearson correlation between the principal’s utilities and those of each agent for each dataset. Table~\ref{tab:correlation} reports the percentage of positively correlated problems and average correlation coefficients, which are positive for all datasets and agents. This result justifies the non-negative alignment hypothesis in Assumption~\ref{assump:structure}. Although the principal’s and agent’s objectives are not perfectly aligned, solving the same task induces partial alignment, so utilities move together on average rather than in opposite directions.

\begin{table}[ht]
\caption{Percentage of valid problems across all agents and datasets.}
\label{tab:validity}
\centering
% \resizebox{.6\linewidth}{!}{%
\begin{tabular}{c|ccc}
\toprule
\textbf{Agent} &\textbf{Math}& \textbf{GSM8K} & \textbf{GSM-Hard} \\
\midrule
Mistral & 90.4 & 91.0 & 97.6 \\
Zephyr  & 91.3 & 90.6 & 95.5 \\
Phi  & 89.2 & 91.4 & 96.0  \\
\bottomrule
\end{tabular}
\end{table}

\begin{table}[h!]
\caption{Verification of non-negative alignment between principal and agents. \emph{Positive} (\%) is the percentage of problems with positive correlation. \emph{Average} is the mean correlation across problems.}
\label{tab:correlation}
\centering
% \resizebox{.7\linewidth}{!}{%
\begin{tabular}{c|cc|cc|cc}
\toprule
\multirow{2}{*}{\textbf{Agent}} 
  & \multicolumn{2}{c|}{\textbf{Math}}
  & \multicolumn{2}{c|}{\textbf{GSM8K}} 
  & \multicolumn{2}{c}{\textbf{GSM-Hard}} \\
& \textbf{Positive} & \textbf{Average} 
& \textbf{Positive} & \textbf{Average} 
& \textbf{Positive} & \textbf{Average} \\
\midrule
Mistral & 91.3 & 0.4852 & 72.9 & 0.2611 & 82.7 & 0.4063 \\
Zephyr  & 91.3 & 0.5128 & 72.6 & 0.2672 & 82.9 & 0.3820 \\
Phi  & 92.5 & 0.4883 &66.2 & 0.1561 & 72.4 & 0.2204   \\
\bottomrule
\end{tabular}
\end{table}
%%%%%%%%%%%%%%%%%%%%%%%%%%%%%%%%%%%%%%%%%%%%%%%%%
\section{Related Works}
\label{sec:related works}
\textbf{LLM Diverse Reasoning.}
A growing body of work improves LLM performance at inference time, often referred to as \emph{test-time compute}. Popular approaches include (i) prompt-based techniques, such as chain-of-thought prompting to elicit structured reasoning \citep{wei2022chain}, and (ii) sampling and search, including Top-$k$, Top-$p$, beam search \citep{feng2023alphazero}, or tree-based exploration such as MCTS \citep{sutton1998reinforcement}. To further refine candidate outputs, methods such as majority voting \citep{wang2022self} and verifier models \citep{lightman2023let} have been employed to select high-quality responses.
A key insight is that sampling diverse reasoning paths (either entire trajectories or step-by-step expansions) significantly outperforms simply picking the most probable answers without exploration, in terms of accuracy and task completion rates \citep{snell2024scaling,brown2024large}.
Our method builds on this paradigm by framing inference as a game-theoretic process: multiple agents strategically generate diverse answers guided by designed prompts. Through reward design and competition in a repeated setting, the system incentivizes exploration of higher-quality reasoning paths while maintaining diversity.

\textbf{LLM Self-improvement.}
A growing line of work explores how LLMs improve reasoning through structured self-improvement without external supervision. Inspired by AlphaZero where learning emerges from play and feedback \citep{silver2017mastering}, LLMs can iteratively provide feedback, refinements, or critiques to improve its answers after generations \citep{madaan2023self, cheng2024self, chen2024self}. However, the effectiveness of this process often depends on the model’s inherent capabilities and may yield misleading gains in weaker models. Our approach aims to enable LLMs to self-improve by learning from feedback provided by a principal without relying on RL or fine-tuning.

\textbf{LLMs as Strategic Agents.}
With the advancement of LLMs, a growing body of research investigates their behavior in game-theoretic multi-agent settings. Empirical studies and benchmarks analyze LLM decision-making across diverse games, either collaborative or adversarial, with two or multiple agents, and under short- or long-term utility objectives \citep{lan2023llm,huang2024far,piatti2024cooperate}. Beyond evaluation, multiple LLMs have been organized into multi-agent systems that engage in debate, feedback exchange, or negotiation to improve answer quality \citep{huang2024ensemble,chen2025incentivizing,wang2024mobile}. Current work designs game-theoretic frameworks that directly enhance LLM reasoning and consistency. For example, adversarial games assign LLMs to attacker and defender roles \citep{cheng2024self,kirchner2024prover}, though these typically require reinforcement learning to train the policies and are often limited to two-agent settings. The consensus game framework in \citet{jacob2023consensus} offers a training-free approach that aligns generation and discrimination to promote consistency, but it also remains constrained to two agents and may converge to suboptimal equilibria. The work most closely aligned with ours is \citep{yi2025debate}, which also applies a game-theoretic framework to enhance multi-agent reasoning. However, our method specifically focuses on creating a competitive environment with ranking feedback to foster diverse reasoning and improvement without the need for fine-tuning or reinforcement learning. In contrast, their approach assumes prior Bayesian beliefs between agents, which is more aligned with agent coordination. 
% Our method extends this line of work by modeling multi-agent interactions with structured feedback to promote diverse reasoning and strategic improvement without fine-tuning or reinforcement learning.

%%%%%%%%%%%%%%%%%%%%%%%%%%%%%%%%%%%%%%%%%%%%%%%%%
\section{Conclusions}
\label{sec:conclusions}
In this work, we introduce ALIGN, a training-free, competitive multi-agent framework that improves LLM reasoning via aligned delegation. By modeling interactions between a principal and multiple agents as a game, ALIGN leverages competition and alignment incentives to elicit higher-quality outputs. We provide theoretical guarantees showing that, under fair comparisons, multi-agent systems outperform single-agent baselines. Using online mirror descent, each agent achieves sublinear regret, and the  average policies of agents converge to a Nash equilibrium. Empirical evaluations across diverse mathematical reasoning benchmarks demonstrated consistent improvements.

% While effective, our framework assumes reliable ranking feedback from the principal, which may limit applicability in more complex settings. 
While current experiments focus on math reasoning, future work will extend ALIGN to other reasoning tasks such as code generation and multi-step planning. Incorporating tools from game theory and mechanism design may further improve alignment and robustness among LLM agents.
%%%%%%%%%%%%%%%%%%%%%%%%%%%%%%%%%%%%%%%%%%%%%%%%%

\bibliography{aiAgent}
% \bibliographystyle{apalike}

%%%%%%%%%%%%%%%%%%%%%%%%%%%%%%%%%%%%%%%%%%%%%%%%%%%%%%%%%%%%%%%%%%%%%%%%%%%%%%%
%%%%%%%%%%%%%%%%%%%%%%%%%%%%%%%%%%%%%%%%%%%%%%%%%%%%%%%%%%%%%%%%%%%%%%%%%%%%%%%
% APPENDIX
%%%%%%%%%%%%%%%%%%%%%%%%%%%%%%%%%%%%%%%%%%%%%%%%%%%%%%%%%%%%%%%%%%%%%%%%%%%%%%%
%%%%%%%%%%%%%%%%%%%%%%%%%%%%%%%%%%%%%%%%%%%%%%%%%%%%%%%%%%%%%%%%%%%%%%%%%%%%%%%
\newpage
\appendix
\onecolumn

\section{Proofs}\label{sec: proofs}

\subsection{Proof of Theorem \ref{thm:multi-agent-delegation}}\label{appendix: proof 1}

\begin{lemma}[\cite{kleinberg2018delegated}]\label{lemma:single agent}
If \(M\) is any mechanism and \(\sigma\) is a best–response strategy (profile) to \(M\).
Let \(f_{M,\sigma}\) denote the interim allocation function, i.e.,
the function that specifies an outcome of \(M\) when
agents follow \(\sigma\). Then there exists a single proposal mechanism $M'$ and a best response $\sigma'$ to $M'$, 
such that the interim allocation functions $f_{M,\sigma}$ and $f_{M',\sigma'}$ are identical.
\end{lemma}

% \begin{proof}
% Let $R$ be the range of the interim allocation function $f_{M,\sigma}$, i.e., the set of all possible 
% outcomes of $M$, other than $\perp$, when the agent acts according to $\sigma$. 
% Define $M'$ to be the single-proposal mechanism with eligible set $R$. 
% Let $\sigma'$ be the strategy in which the agent observes his tuple of samples, 
% $\omega = (\omega_1, \ldots, \omega_n)$, and chooses strategy 
% $\sigma'(\omega) = g(\sigma(\omega))$. 
% By construction the interim allocation functions $f_{M,\sigma}$ and $f_{M',\sigma'}$ are identical. 
% To prove that $\sigma'$ is a best response to $M'$, consider any $\omega \in \Omega^n$ 
% and any $\nu \neq \sigma'(\omega)$. 
% Let $y_0 = U_y(g'(\sigma'(\omega)))$ denote the agent's utility when playing according to $\sigma'$; 
% note that $y_0 \ge 0$. 
% We wish to show that the agent cannot benefit by playing $\nu$ instead, i.e.,
% \begin{equation*}\label{eq:single agent}
%     U_y(g'(\nu)) \le y_0.
% \end{equation*}
% If $\nu \notin R$ then $g'(\nu) = \perp$ which implies Eq.~\eqref{eq:single agent} since $y_0 \ge 0$. 
% If $\nu \in R$ then $\nu = g(\tilde{\sigma})$ for some $\tilde{\sigma} \in \Sigma$. 
% Now Eq.~\eqref{eq:single agent} follows because strategy $\sigma$ is a best response for mechanism $M$, 
% and $U_y(g'(\nu))$ denotes the agent's utility when playing strategy $\tilde{\sigma}$ in $M$ 
% whereas $y_0$ denotes his utility when playing $\sigma$. 
% \end{proof}

\begin{proof}[Proof of Theorem \ref{thm:multi-agent-delegation} (part a)]
The proof directly follows Lemma~\ref{lemma:single agent} and Theorem E.3 in \citet{shin2023delegating}.
\end{proof}

For part b, We begin by defining a key event in which the answer chosen by an agent to maximize their own expected utility is also the one most preferred by the principal. 
Under the condition of Theorem \ref{thm:multi-agent-delegation}, the agent’s expected utility for proposing \(\omega\) is approximately
\[
\left[\frac{U(\omega)+1}{2}\right]^{k(N-1)} U_y(\omega),
\]
where \(k(N-1)\) is the total number of competing candidate answers from the other agents. This motivates the agent to select
\[
\argmax_{\omega \in \bar{\omega}_i} 
\{[U(\omega)+1]/2\}^{k(N-1)} U_y(\omega).
\]
However, the system designer would ideally prefer the agent to select
\(
\argmax_{\omega \in \bar{\omega}_i} U(\omega),
\)
which aligns with the principal's interest.
We therefore define the event \(E_i\) under which the agent’s optimal choice (based on their own objective) coincides with the principal’s preferred one:
\[
E_i(\{U(\omega), U_y(\omega)\}_{\omega \in \bar{\omega}_i}) 
= \left\{ \bar{\omega}_i : \argmax_{\omega \in \bar{\omega}_i} \{[U(\omega)+1]/2\}^{k(N-1)} U_y(\omega) 
= \argmax_{\omega \in \bar{\omega}_i} U(\omega) \right\}.
\]
In other words, this event captures the favorable case when the agent’s self-interested decision also maximizes the principal’s utility.
For notational simplicity, we refer to \(E_i(\{U(\omega), U_y(\omega)\}_{\omega \in \bar{\omega}_i})\) simply as \(E_i\) or \(E_i(\bar{\omega}_i)\).
At the joint level, define the event:
\[
E(\bar{\omega}) := E(U(\bar{\omega}), U_y(\bar{\omega})) 
= \bigcap_{i \in [N]} E_i(\bar{\omega}_i),
\]
which holds if all agents simultaneously select answers that align with the principal's preference.

Now, we will prove that given $E(U(\bar{\omega}), U_y(\bar{\omega}))$ and the other’s strategies, 
proposing a answer that maximizes $U(\cdot)$ will be approximately best-response. 

% Let’s restrict our attention to the case in which $E$ happens given $\bar{\omega}$. 
% Suppose that agent $j \neq i$ proposes a candidate to maximize $U(\cdot)$, 
% i.e., $\sigma_j = \argmax_{\omega \in \bar{\omega}_j} U(\omega)$. 
% Then, conditioned on $E$, agent $i$’s expected utility of proposing $\omega \in \bar{\omega}_i$ can be obtained as
% \begin{align*}
% U_i(\omega, \sigma_{-i} \mid E(\bar{\omega})) 
% &= \mathbb{P}\,[\omega \text{ is winner} \mid E(\bar{\omega})] \cdot U_y(\omega)\\
% &= \mathbb{P}\,[U(\omega) \geq U(\omega'), \, \forall \omega' \in \bar{\omega}_{-i} \mid E(\bar{\omega})] \cdot U_y(\omega) \\&
% = U(\omega)^{k(N-1)} U_y(\omega).
% \end{align*}

% Since we conditioned on $E(\bar{\omega})$, for agent $i$, the answer that maximizes 
% $U(\omega)^{k(N-1)} U_y(\omega)$ is exactly the same as the answer that maximizes $U(\omega)$. 
% This verifies that conditioned on $E(\bar{\omega})$ and assuming that the other agent $j \neq i$ 
% proposes an answer that maximizes $U(\cdot)$, also maximizing $U(\cdot)$ is agent $i$’s best response. 
Let $\sigma_i^x$ be a strategy to propose a answer that maximizes $U(\cdot)$ for agent $i$.
Given that all the other agents $j$ playing $\sigma_j^x$ for $j \neq i$, 
we can further obtain the following regarding agent $i$’s utility:
\begin{align*}
U_i(\sigma_i^x, \sigma_{-i}) 
&= \mathbb{E}\,[U_i(\sigma_i^x, \sigma_{-i}) \mid E] \, \mathbb{P}[E] 
   + \mathbb{E}\,[U_i(\sigma_i^x, \sigma_{-i}) \mid E^c] \, \mathbb{P}[E^c] \\
&\geq \mathbb{E}\,[U_i(\sigma_i^x, \sigma_{-i}) \mid E] \, \mathbb{P}[E] - \mathbb{P}[E^c]  \\
&\geq \mathbb{E}\,[U_i(\sigma_i', \sigma_{-i}) \mid E] \, \mathbb{P}[E] - \mathbb{P}[E^c]\\
&= \mathbb{E}\,[U_i(\sigma_i', \sigma_{-i})] - \mathbb{E}\,[U_i(\sigma_i', \sigma_{-i} \mid E^c)] \, \mathbb{P}[E^c] - \mathbb{P}[E^c]\\
&\geq \mathbb{E}\,[U_i(\sigma_i', \sigma_{-i})] - 2\mathbb{P}[E^c],
\end{align*}
where the second inequality follows from the fact that given $E$, 
playing $\sigma_i^x$ is weakly dominant over any other strategy $\sigma_i'$ for agent $i$. 

This implies that if we characterize a good lower bound $\alpha$ such that $\mathbb{P}[E] \geq \alpha$, we have
\[
U_i(\sigma_i^x, \sigma_{-i}) \;\geq\; U_i(\sigma_i', \sigma_{-i}) - 2(1 - \alpha),
\]
which implies that $\sigma_i^x$ is $2(1-\alpha)$-approximate BNE. The next step will be to consider the lower bound of event \(E\).

\begin{lemma}[\cite{shin2023delegating}]\label{lemma:independent bne} 
Under Assumption \ref{assump:structure}, assume the utility function of principals and agents are independent, we have \(\mathbb{P}[E]\geq \alpha_0 = e^{-\frac{N^2}{2(N-1)^2}}\).
\end{lemma}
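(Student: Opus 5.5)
The plan is to prove the bound agent by agent and then multiply. Write $m:=k(N-1)$ and $x(\omega):=(U(\omega)+1)/2$, so that under $\mathcal{U}[-1,1]$ the values $x(\omega)$ are i.i.d.\ $\mathcal{U}[0,1]$. By the independence hypothesis the agent utilities $U_y(\omega)\in[0,1]$ are independent of the $x(\omega)$; I will take them i.i.d., as in the setting of \citet{shin2023delegating}. Since the agents draw their candidate sets $\bar{\omega}_i$ independently and $E_i$ depends only on $\bar{\omega}_i$, the events $E_1,\dots,E_N$ are independent. Hence $\mathbb{P}[E]=\prod_{i}\mathbb{P}[E_i]$, and it suffices to show
\[
\mathbb{P}[E_i]\ \ge\ \exp\!\Big(-\tfrac{N}{2(N-1)^2}\Big)
\]
for each $i$; multiplying the $N$ factors then gives exactly $\alpha_0$.

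For a single agent, I first condition on the candidate $\omega^\star$ with largest $x$. Write $x^\star:=x(\omega^\star)$ and $s:=U_y(\omega^\star)$. The event $E_i$ fails only if some other candidate $\omega$ has $x(\omega)^m U_y(\omega)>(x^\star)^m s$. Conditionally on $x^\star$, the other $k-1$ values $u=x(\omega)/x^\star$ are i.i.d.\ $\mathcal{U}[0,1]$ and are independent of their $U_y$ values and of $s$. A union-free, product-form lower bound therefore holds:
\[
\mathbb{P}[E_i\mid s]\ \ge\ \big(1-q(s)\big)^{k-1},\qquad q(s):=\mathbb{P}[u^m Y>s],\quad Y\sim U_y .
\]
The crude estimate $q(s)\le 1-s^{1/m}$, obtained from $Y\le 1$, gives only $\mathbb{E}[s^{(k-1)/m}]\ge (N-1)/N$. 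That yields $(1-1/N)^N\to e^{-1}$, which is too weak, since the target tends to $e^{-1/2}$.

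So the key step is to compute $q(s)$ more carefully. Using $\mathbb{P}[u^m>s/y]=1-(s/y)^{1/m}$ for $y>s$, I get
\[
q(s)=\mathbb{E}\big[(1-(s/Y)^{1/m})\mathbb{1}\{Y>s\}\big]\ \approx\ \tfrac1m\,\mathbb{E}\big[\log(Y/s)_+\big].
\]
For uniform $Y$ this is $\tfrac1m\,(s-1-\log s)$, which, unlike the crude bound, vanishes to second order as $s\to1$. Then I use $(1-q)^{k-1}\ge \exp\!\big(-(k-1)q/(1-q)\big)$ together with $(k-1)/m\le 1/(N-1)$ and average over $s$. Jensen in the form $\mathbb{E}[e^{-Z}]\ge e^{-\mathbb{E}Z}$ reduces the problem to bounding $\mathbb{E}_s[s-1-\log s]$. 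For uniform $s$ this expectation equals $1/2$, and that $1/2$ is the source of the factor $2$ in the exponent. The residual $N/(N-1)$ correction, beyond $1/(N-1)$, absorbs the $1/(1-q)$ term and the second-order error in $1-(s/Y)^{1/m}\le \tfrac1m\log(Y/s)$; this last inequality is in fact exact as an upper bound, since $1-e^{-t}\le t$.

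The main obstacle is the bookkeeping in this final step. I need the uniform-in-$k$ bound $(k-1)\,\mathbb{E}\big[q/(1-q)\big]\le N/(2(N-1)^2)$, and this has to hold already for $N=2$ and small $k$, where $q$ is not small. My first attempt would be to bound $q(s)\le \tfrac{1}{m}(s-1-\log s)$ exactly, then control $1/(1-q)$ via $m\ge N-1$. If that fails at small $N$, I would instead bound $\mathbb{E}_s[(1-q(s))^{k-1}]$ directly by convexity in $k$, and check the extreme case $k\to\infty$ separately.
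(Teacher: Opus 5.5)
The paper gives no proof of this lemma; it imports the bound from \citet{shin2023delegating}, so your argument has to stand on its own. Your reduction is sound. $\mathbb{P}[E]=\prod_i\mathbb{P}[E_i]$ holds by independence across agents, and the ratio trick gives $\mathbb{P}[E_i]=\mathbb{E}_s[(1-q(s))^{k-1}]$ (in fact with equality). The gap is that the decisive inequality $(k-1)\,\mathbb{E}_s[q/(1-q)]\le N/(2(N-1)^2)$ is left unproven, and the route you propose for it fails. The bound $q(s)\le\frac1m(s-1-\log s)$ blows up as $s\to0$: it exceeds $1$ once $s$ is below roughly $e^{-(m+1)}$, so it gives no control of $1/(1-q)$ there. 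Even where it is below $1$, the function $b/(1-b)$ with $b=\frac1m(s-1-\log s)$ has a non-integrable (logarithmic) singularity where $b$ crosses $1$. Neither $m\ge N-1$ nor the spare factor $N/(N-1)$ can absorb this, because the difficulty is the behaviour of $1-q$ near $s=0$, not second-order bookkeeping. Your fallback (convexity in $k$ plus the limit $k\to\infty$) is not an argument as it stands, since you have not shown that $\mathbb{P}[E_i]$ is monotone in $k$.

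The missing idea is to bound $1-q$ from below rather than $q$ from above, and your own integral already gives it exactly: $1-q(s)=\frac{ms^{1/m}-s}{m-1}$. Substituting $s=t^m$ gives $\mathbb{E}_s[1/(1-q(s))]=m\log\frac{m}{m-1}$. The trapezoid bound $\log\frac{m}{m-1}\le\frac12\bigl(\frac1{m-1}+\frac1m\bigr)$ then yields $\mathbb{E}_s[q/(1-q)]\le\frac{1}{2(m-1)}$. Hence $(k-1)\,\mathbb{E}_s[q/(1-q)]\le\frac{k-1}{2(k(N-1)-1)}\le\frac{1}{2(N-1)}$ for $k\ge2$, and your Jensen step gives $\mathbb{P}[E_i]\ge e^{-1/(2(N-1))}$, which is stronger than the per-agent target. (Equivalently, apply Jensen to $\log(1-q)$ itself, writing $1-q=s^{1/m}(1+x)$ with $0\le x=\frac{1-s^{1-1/m}}{m-1}\le\frac1{m-1}$; then $\mathbb{E}_s[-\log(1-q)]\le\frac1m-\mathbb{E}[x]+\frac12\mathbb{E}[x^2]\le\frac1{2m}$.)

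Separately, you silently use $U_y\sim\mathcal{U}[0,1]$ in two places, and this must be stated as a hypothesis, because independence alone does not suffice. If $U_y$ equals $1$ or $\epsilon$ with probability $\frac12$ each, then $\mathbb{P}[E_i]\to\frac12+2^{-k}$ as $\epsilon\to0$, and for $N=3$, $k\ge3$ the product falls below $e^{-9/8}$.
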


\begin{lemma}\label{lemma:correlation}
Suppose \(U(\omega)\) and \(U_y(\omega)\) are positively correlated. Then the probability of the event \(E_i(\bar{\omega}_i) \)
is weakly greater than in the case where \(U(\omega)\) and \(U_y(\omega)\) are independent:
\[
\mathbb{P}_{U \perp U_y}[E_i(\bar{\omega}_i)] \leq \mathbb{P}_{U \uparrow U_y}[E_i(\bar{\omega}_i)].
\]
\end{lemma}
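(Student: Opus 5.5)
We need to show that the probability of $E_i$ (the agent's maximizer of $\phi(\omega):=\{[U(\omega)+1]/2\}^{k(N-1)}U_y(\omega)$ coincides with the maximizer of $U$) is weakly larger under positive dependence than under independence. The plan is to fix the marginals and compare the two couplings. By Assumption~\ref{assump:structure}(ii) the candidates $\omega\in\bar\omega_i$ are i.i.d., each with pair $(U(\omega),U_y(\omega))$. Order them by principal utility, $U_{(1)}<\dots<U_{(k)}$, and let $V_{(j)}$ be the concomitant agent utility attached to $U_{(j)}$. Because $\phi$ is strictly increasing in $U$ for fixed $U_y\ge 0$, we can write
\[
E_i=\bigl\{V_{(j)}\,g(U_{(j)})\le V_{(k)}\,g(U_{(k)})\ \text{for all } j<k\bigr\},
\qquad g(u)=\{(u+1)/2\}^{k(N-1)} .
\]
This set is increasing in $V_{(k)}$ and decreasing in each $V_{(j)}$, $j<k$. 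In other words, $E_i$ is the event that the top-$U$ candidate's agent utility is not too small relative to the others.

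The second step is to make "positively correlated" precise enough to compare probabilities. Correlation alone is too weak for a strict ordering statement, so I would interpret $U\uparrow U_y$ as positive regression dependence: the conditional law of $U_y$ given $U=u$ is stochastically increasing in $u$. I would either state this interpretation openly or use a monotone coupling $U_y=h(U,Z)$ with $h$ nondecreasing in its first argument and $Z$ independent noise. Conditioning on the order statistics $U_{(1)},\dots,U_{(k)}$, the concomitants $V_{(j)}$ are conditionally independent, with laws $F(\cdot\mid U_{(j)})$. Under independence, all of them share the marginal law $F_Y$.

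The core comparison is then to show that, conditionally on the $U$'s, the probability of $E_i$ does not decrease when we pass from the independent laws to the dependent ones. I would do this by an interpolation or coupling argument. Start from the independent coupling, where every $V_{(j)}$ has law $F_Y$. In the dependent coupling, $V_{(k)}$ is stochastically largest among the $V_{(j)}$, since $U_{(k)}$ is the largest $U$. Because $E_i$ is monotone in the right directions, raising $V_{(k)}$ stochastically and lowering the others can only help.

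The difficulty is that the individual $V_{(j)}$, $j<k$, need not be stochastically below $F_Y$ under dependence; only the ordering among them is guaranteed. To handle this, I would use exchangeability. Under independence, $\mathbb{P}[E_i\mid U]$ is a symmetric function of the multiset $\{V_{(j)}\}$ with respect to the distribution assignments. Then I would apply a rearrangement-type inequality: assigning the stochastically larger laws to higher $U$-ranks, which carry larger weights $g(U_{(j)})$, increases the probability that the top-weighted index wins. This is a Lorentz/rearrangement argument over couplings, and the pairwise swap step is the one I would check first for $k=2$. There the claim reduces to $\mathbb{P}[V_1 g_1\le V_2 g_2]$ being larger when $V_2\succeq_{st}V_1$ than when both have a common mixture law. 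I expect this to follow from a direct integral computation using $g_2\ge g_1$.

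Finally, I would integrate over the $U$'s. The marginal law of $U$ is unchanged between the two couplings, so the conditional inequality yields $\mathbb{P}_{U\perp U_y}[E_i]\le\mathbb{P}_{U\uparrow U_y}[E_i]$. The main obstacle is the rearrangement step for general $k$, together with the gap between plain nonnegative correlation and the stochastic-monotone dependence that the argument actually uses.
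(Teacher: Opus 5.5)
\paragraph{Where the plan breaks.} Your reformulation of $E_i$ as an event increasing in $V_{(k)}$ and decreasing in the other $V_{(j)}$ is correct, and so is the conditional independence of the concomitants. The failure is in the conditional comparison that everything else rests on.

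Given $U_{(1)},\dots,U_{(k)}$, the dependent model puts the laws $F(\cdot\mid U_{(j)})$ on the slots. The independent model puts $F_Y=\int F(\cdot\mid u)\,dF_U(u)$ on \emph{every} slot. These are not rearrangements of one another, so no rearrangement or Lorentz-type inequality connects them. Exchangeability only lets you compare the sorted assignment of $\{F(\cdot\mid U_{(j)})\}_j$ with other assignments of the \emph{same} laws.

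Your $k=2$ reduction runs into exactly this, whichever common law you meant:
\begin{itemize}
\item If the common law is the mixture $\tfrac12(F_1+F_2)$, the inequality is immediate from monotonicity. But that is not the independent model, and averaging it over the $U$'s does not give i.i.d.\ $F_Y$ draws independent of the weights $g(U_{(j)})$.
\item If the common law is $F_Y$, the conditional claim is false, and so is the lemma itself, as the example below shows.
\end{itemize}

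\paragraph{Counterexample.} Take $N=k=2$ and $U\sim\mathcal{U}[-1,1]$, so $S=(U+1)/2$ is uniform and $g=S^2$. Let $U_y=y_0$ if $T<p$ and $U_y=y_0e^{L}$ if $T\ge p$. Couple $(S,T)$ by the FGM copula with density $1+\theta(1-2s)(1-2t)$, $\theta\in(0,1]$. This copula is TP2, hence stochastically increasing and positively correlated, so it satisfies any reading of the hypothesis.

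With $S_A>S_B$, the event $E_i$ fails iff $T_A<p\le T_B$ and $S_B/S_A>c:=e^{-L/2}$. Since $\mathbb{P}_\theta[T<p\mid s]=p+\theta(1-2s)p(1-p)$, on that region the conditional failure probability is
\[
p(1-p)\bigl[1-\theta(2S_A-1)(1-p)\bigr]\bigl[1+\theta(2S_B-1)p\bigr].
\]
For small $\theta>0$ this exceeds the independent value $p(1-p)$ whenever $(2S_B-1)p>(2S_A-1)(1-p)$, for example $S_A\approx S_B>1/2$ and $p>1/2$. So your conditional inequality fails. Integrating over $\{cs_A<s_B<s_A\}$ gives
\[
\frac{d}{d\theta}\mathbb{P}_\theta[E_i]\Big|_{\theta=0}=\frac{p(1-p)(1-c)(1-2pc)}{3},
\]
which is negative whenever $pc>1/2$ (e.g.\ $p=c=0.9$). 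So the lemma itself fails, not just your route to it.

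The mechanism: when the two candidates are nearly tied and high in $U$, positive dependence raises the runner-up's chance of the rare high $U_y$ by a relative factor $\theta(2s-1)p$. It lowers the leader's chance of the common low value only by $\theta(2s-1)(1-p)$.

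\paragraph{Consequence.} Stochastic monotonicity, or even TP2, is not enough, and no sharper rearrangement argument can rescue the plan. The statement needs an additional structural hypothesis. For orientation, the paper's proof conditions on the $x$-values as you do and simply asserts $F_{y\mid x_1}\succeq F_y\succeq F_{y\mid x_j}$. That is precisely the dominance you rightly flagged as unavailable. It fails for the top candidate too, since $x_1$ is only a sample maximum.
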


\begin{proof}
We prove the lemma using a coupling argument based on the stochastic dominance induced by positive correlation.

Let \(\bar{\omega}_i\) be the set of proposals for agent \(i\), with \(|\bar{\omega}_i| = k\). Let \(\omega^* = \operatorname{argmax}_{\omega \in \bar{\omega}_i} U(\omega)\) be the proposal that maximizes \(U\), and let \(x_1 = U(\omega^*)\). For \(j = 2, \ldots, k\), let \(x_j\) be the \(x\)-values of the other proposals, ordered such that \(x_1 \geq x_2 \geq \cdots \geq x_k\). Define \(B_j = \left( \frac{x_j}{x_1} \right)^{k(N-1)}\) for \(j = 2, \ldots, k\).

The event \(E_i\) occurs if and only if for all \(j = 2, \ldots, k\),
\[
\frac{U_y(\omega^*)}{U_y(\omega_j)} \geq B_j.
\]
Define \(C_j = \left\{ \frac{U_y(\omega^*)}{U_y(\omega_j)} \geq B_j \right\}\), so that \(E_i = \cap_{j=2}^k C_j\).

Now, condition on the \(x\)-values \(\bm{x} = (x_1, \ldots, x_k)\). In the independent case, \(U_y(\omega)\) is drawn from the marginal distribution \(F_y\) for each \(\omega\), independently. In the positively correlated case, \(U_y(\omega)\) is drawn from the conditional distribution \(F_{y|x(\omega)}\). By positive correlation, for any \(x_1 > x_j\), the distribution \(F_{y|x_1}\) stochastically dominates \(F_{y|x_j}\), i.e.,
\[
\mathbb{P}[y > t \mid x_1] \geq \mathbb{P}[y > t \mid x_j] \quad \forall t.
\]
In particular, \(F_{y|x_1}\) stochastically dominates the marginal distribution \(F_y\), and \(F_y\) stochastically dominates \(F_{y|x_j}\) for \(j \geq 2\).

By stochastic dominance, we can construct coupled random variables as follows:
\begin{itemize}
    \item For \(U_y(\omega^*)\), let \(Y_1^+ \sim F_{y|x_1}\) and \(Y_1^* \sim F_y\) such that \(Y_1^+ \geq Y_1^*\) almost surely.
    \item For each \(U_y(\omega_j)\) with \(j \geq 2\), let \(Y_j^- \sim F_{y|x_j}\) and \(Y_j^* \sim F_y\) such that \(Y_j^- \leq Y_j^*\) almost surely.
\end{itemize}
Such couplings exist due to the stochastic dominance relations.

Now, for each \(j \geq 2\), almost surely,
\[
\frac{Y_1^+}{Y_j^-} \geq \frac{Y_1^*}{Y_j^*}.
\]
Therefore,
\[
\left\{ \frac{Y_1^*}{Y_j^*} \geq B_j \right\} \subseteq \left\{ \frac{Y_1^+}{Y_j^-} \geq B_j \right\}.
\]
This implies that for each \(j\),
\[
\mathbb{P}\left[ \frac{Y_1^+}{Y_j^-} \geq B_j \right] \geq \mathbb{P}\left[ \frac{Y_1^*}{Y_j^*} \geq B_j \right].
\]

Since the \(y\)-values are conditionally independent given \(\bm{x}\), the events \(C_j\) are conditionally independent given \(y(\omega^*)\) in both cases. However, by the above coupling, we have almost surely,
\[
\bigcap_{j=2}^k \left\{ \frac{Y_1^*}{Y_j^*} \geq B_j \right\} \subseteq \bigcap_{j=2}^k \left\{ \frac{Y_1^+}{Y_j^-} \geq B_j \right\}.
\]
Thus, the joint probability satisfies:
\[
\mathbb{P}\left[ \bigcap_{j=2}^k \left\{ \frac{Y_1^+}{Y_j^-} \geq B_j \right\} \mid \bm{x} \right] \geq \mathbb{P}\left[ \bigcap_{j=2}^k \left\{ \frac{Y_1^*}{Y_j^*} \geq B_j \right\} \mid \bm{x} \right].
\]
The right-hand side is the conditional probability of \(E_i\) in the independent case, and the left-hand side is the conditional probability in the positively correlated case.

Taking expectation over \(\bm{x}\), we obtain:
\[
\mathbb{P}_{U \uparrow U_y}[E_i(\bar{\omega}_i)] \geq \mathbb{P}_{U \perp U_y}[E_i(\bar{\omega}_i)],
\]
which completes the proof.
\end{proof}

By Lemma~\ref{lemma:independent bne}, when \(U(\omega)\) and \(U_y(\omega)\) are independent and identically distributed across candidates, the probability of the alignment event \(E\) admits an explicit lower bound. This implies the existence of a \(2(1 - \alpha)\)-approximate BNE under the independent setting, where \(\alpha = 1 - \mathbb{P}[E]\).
By Lemma~\ref{lemma:correlation}, when \(U(\omega)\) and \(U_y(\omega)\) are positively correlated, the probability of \(E\) is weakly greater than in the independent case.
Therefore, the same approximation bound holds in the positively correlated setting.

\subsection{Proof of Theorem \ref{thm:no regret}}

We begin with some auxiliary lemmas that characterize the structure of the entropy-regularized policy updates. Specifically, we leverage the optimality conditions induced by mirror descent with negative entropy to relate policy differences to KL divergence terms. These lemmas follow standard arguments in the online learning literature, but we include them here for completeness and to establish notation for our subsequent regret analysis \citep{jacob2022modeling,bakhtin2022mastering,shalev2012online}. 

\begin{lemma}
At any round \( t \), if player \( i \)'s policy update follows:
\[
\pi_i^{t+1} = \argmax_{\pi} \left\{ \sum_{t'=1}^{t} U_i^{t'}(\pi) - \frac{1}{\eta} \varphi(\pi) \right\},
\]
where the regularizer is the negative entropy,
\[
\varphi(\pi) := \sum_{a \in \mathcal{A}_i} \pi(a) \log \pi(a),
\]
then the resulting policy \(\pi_i^{t+1}\) is equivalent to the one generated by Algorithm~\ref{algorithm}.
\end{lemma}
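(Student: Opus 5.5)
The plan is to solve the follow-the-regularized-leader program in closed form and match it with the exponential-weights rule of Algorithm~\ref{algorithm}. The first step is to write the cumulative utility as a linear functional of $\pi$. Set $L^t(a) := \sum_{t'=1}^{t} r_i^{t'} U_{y_i}(a, a_{-i}^{t'})$, which is exactly the cumulative utility estimate $U_i^t(a)$ built by the update line of Algorithm~\ref{algorithm} (starting from $U_i^0 \equiv 0$). Then
\[
\sum_{t'=1}^{t} U_i^{t'}(\pi) = \sum_{a\in\mathcal{A}_i} \pi(a) L^t(a).
\]
The optimization becomes
\[
\max_{\pi\in\Delta(\mathcal{A}_i)} \Big\{ \langle \pi, L^t\rangle - \tfrac{1}{\eta}\sum_a \pi(a)\log\pi(a) \Big\},
\]
a strictly concave objective over the simplex, so the maximizer is unique.

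The second step is to compute that maximizer. Form the Lagrangian with a multiplier $\lambda$ for $\sum_a \pi(a)=1$. Stationarity gives
\[
L^t(a) - \tfrac{1}{\eta}(\log\pi(a)+1) - \lambda = 0,
\]
hence $\pi(a)\propto \exp\{\eta L^t(a)\}$. The nonnegativity constraints are inactive: the entropy's derivative diverges at the boundary, so the optimum lies in the relative interior and every coordinate is strictly positive. Normalizing yields the softmax $\pi_i^{t+1}(a)=\exp\{\eta U_i^t(a)\}/\sum_{a'}\exp\{\eta U_i^t(a')\}$. Equivalently, one can invoke the Gibbs variational principle, $\langle \pi, L\rangle - \tfrac1\eta \varphi(\pi) = \tfrac1\eta\log Z - \tfrac1\eta D_{\mathrm{KL}}(\pi\,\|\,p^t)$ with $p^t\propto e^{\eta L^t}$. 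This identity gives uniqueness and the formula at once, and I would present it as the main argument since it avoids the KKT bookkeeping.

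The last step is matching indices with the policy line of Algorithm~\ref{algorithm}, which I expect to be the only delicate point. The algorithm writes $\pi_i^t \propto \exp\{\eta U_i^{t-1}\}$, which under the index shift $t\mapsto t+1$ is exactly the expression derived above. Its base case, $\pi_i^1$ uniform, agrees with $U_i^0\equiv 0$. If a non-uniform $\pi_i^0$ is intended, I would replace $\varphi$ by $D_{\mathrm{KL}}(\cdot\,\|\,\pi_i^0)$. This changes the solution to $\pi\propto \pi_i^0 e^{\eta L^t}$ and is consistent with the KL term appearing in Theorem~\ref{thm:no regret}. I would state this convention explicitly so that the equivalence holds for every $t$.
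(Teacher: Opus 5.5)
Your proposal is correct and follows the same route as the paper: write the cumulative utility as the linear functional $\langle \pi, L^t\rangle$, recognize an entropy-regularized linear maximization over the simplex, and identify its softmax maximizer with the policy line of Algorithm~\ref{algorithm}. The paper simply asserts the softmax form as well known, while you derive it (via the Gibbs identity or the Lagrangian) and make the index shift and uniform base case explicit; one minor caveat on your side remark is that a $D_{\mathrm{KL}}(\cdot\,\|\,\pi_i^0)$ regularizer gives $\pi\propto\pi_i^0 e^{\eta L^t}$, which agrees with Algorithm~\ref{algorithm} only if its policy line is also reweighted by $\pi_i^0$, since as written it carries no prior factor.
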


\begin{proof}
We begin by rewriting the cumulative utility term as a linear function over actions:
\[
\sum_{t'=1}^{t} U_i^{t'}(\pi) = \sum_{a \in \mathcal{A}_i} \left( \sum_{t'=1}^{t} U_i(a, a_{-i}^{t'}) \right) \pi(a).
\]
Substituting into the objective, the optimization becomes:
\[
\max_{\pi} \left\{ \sum_{a \in \mathcal{A}_i} \left( \sum_{t'=1}^{t} U_i(a, a_{-i}^{t'}) \right) \pi(a) - \frac{1}{\eta} \sum_{a \in \mathcal{A}_i} \pi(a) \log \pi(a) \right\}.
\]
This is a standard instance of entropy-regularized maximization over a simplex. Its answer is well known to be a softmax distribution over accumulated utilities:
\[
\pi_i^{t+1}(a) = \frac{\exp\left( \eta \sum_{t'=1}^{t} U_i(a, a_{-i}^{t'}) \right)}{\sum_{a' \in A_i} \exp\left( \eta \sum_{t'=1}^{t} U_i(a', a_{-i}^{t'}) \right)}.
\]
This precisely matches the update rule employed in Algorithm~\ref{algorithm}, where action values are incrementally aggregated and exponentiated with temperature \( \eta \). Thus, the two procedures are equivalent.
\end{proof}

\begin{lemma}
Suppose that at each round \( t+1 \), player \( i \) updates their policy via the following optimization:
\[
\pi_i^{t+1} = \argmax_{\pi} \left\{ \sum_{t'=1}^{t} \tilde{U}_i(\pi, a_{-i}^{t'}) - \frac{1}{\eta} \varphi(\pi) \right\},
\]
where \( \eta > 0 \) is a fixed parameter, the regularizer \( \varphi \) is the negative Shannon entropy
\[
\varphi(\pi) := \sum_{a \in \mathcal{A}_i} \pi(a) \log \pi(a),
\]
and the shifted utility function is defined as
\[
\tilde{U}_i(a, a_{-i}^{t}) := U_i(a, a_{-i}^t) - \min_{\mathbf{a} \in \mathcal{A}_1 \times \cdots \times \mathcal{A}_N} U_i(\mathbf{a}).
\]
Then, the resulting policy admits a softmax representation:
\[
\pi_i^{t+1}(a) = \frac{\exp\left(v_i^{t+1}(a)\right)}{\sum_{a' \in \mathcal{A}_i} \exp\left(v_i^{t+1}(a')\right)} \quad \forall a \in \mathcal{A}_i,
\]
where
\[
v_i^{t+1}(a) := \eta \sum_{t'=1}^{t} \tilde{U}_i(a, a_{-i}^{t'}).
\]
\end{lemma}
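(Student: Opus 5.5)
The plan is to solve the entropy-regularized maximization over the simplex $\Delta(\mathcal{A}_i)$ directly with its KKT conditions, in the same way as the previous lemma. The only change is that the utilities are shifted by a constant.

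\emph{Step 1: linearize the objective.} Write the objective as
\[
\sum_{a \in \mathcal{A}_i} \Bigl(\sum_{t'=1}^{t} \tilde{U}_i(a, a_{-i}^{t'})\Bigr)\pi(a) - \frac{1}{\eta}\sum_{a\in\mathcal{A}_i}\pi(a)\log\pi(a),
\]
using linearity of $\tilde U_i(\pi,a_{-i}^{t'})$ in $\pi$.

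\emph{Step 2: existence and interiority of the maximizer.} The objective is continuous and strictly concave on the compact simplex, so a maximizer exists and is unique. The derivative of $-\pi(a)\log\pi(a)$ tends to $+\infty$ as $\pi(a)\to 0^+$. Hence the maximizer lies in the relative interior, and only the equality constraint $\sum_a \pi(a)=1$ is active. This boundary argument is the only point needing any care, and it is routine.

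\emph{Step 3: stationarity.} Introduce a multiplier $\lambda$ for $\sum_a\pi(a)=1$ and set the partial derivatives to zero:
\[
\sum_{t'=1}^{t}\tilde U_i(a,a_{-i}^{t'}) - \frac{1}{\eta}\bigl(\log\pi(a)+1\bigr) - \lambda = 0 .
\]
This gives $\pi(a)\propto \exp\bigl(\eta\sum_{t'}\tilde U_i(a,a_{-i}^{t'})\bigr)=\exp(v_i^{t+1}(a))$. Normalizing yields the stated softmax form. By strict concavity, this stationary point is the unique maximizer.

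\emph{Step 4: consistency with Algorithm~\ref{algorithm}.} The shift $\min_{\mathbf a}U_i(\mathbf a)$ is a constant $c$ independent of $a$. Therefore $v_i^{t+1}(a)=\eta\sum_{t'}U_i(a,a_{-i}^{t'})-\eta t c$, and the common factor $e^{-\eta t c}$ cancels in the normalization. The policy thus coincides with the unshifted one from the previous lemma and with Algorithm~\ref{algorithm}. This invariance is what justifies working with the nonnegative shifted utilities $\tilde U_i\ge 0$ in the subsequent regret analysis.
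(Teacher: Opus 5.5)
Your proposal is correct and follows the same route as the paper: write the cumulative shifted utility as a linear function of $\pi$, recognize an entropy-regularized linear maximization over the simplex whose solution is the softmax of $\eta$ times the accumulated scores, and note that the constant shift $\min_{\mathbf a}U_i(\mathbf a)$ cancels in the normalization. The paper simply cites that softmax solution as well known, whereas you derive it explicitly (existence, interiority, KKT stationarity, uniqueness via strict concavity); this adds rigor but does not change the argument.
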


\begin{proof}
To simplify notation, let \( \gamma := \min_{\mathbf{a} \in \mathcal{A}_1 \times \cdots \times \mathcal{A}_N} \tilde{U}_i(\mathbf{a}) \). Since the minimum utility is constant across all actions, subtracting it from the original utilities does not affect the maximizer of the objective. Thus, the cumulative utility component can be rewritten as:
\[
\sum_{t'=1}^t \tilde{U}_i(\pi, a_{-i}^{t'}) = \sum_{a \in \mathcal{A}_i} \left( \sum_{t'=1}^t \tilde{U}_i(a, a_{-i}^{t'}) \right) \pi(a).
\]
Substituting into the objective function yields:
\[
\pi_i^{t+1} = \argmax_{\pi} \left\{ \eta \sum_{a \in \mathcal{A}_i} \left( \sum_{t'=1}^{t} \tilde{U}_i(a, a_{-i}^{t'}) \right) \pi(a) - \sum_{a \in \mathcal{A}_i} \pi(a) \log \pi(a) \right\}.
\]
This is a classical instance of entropy-regularized linear optimization over the probability simplex. The optimal answer is known to be a softmax distribution over the accumulated (shifted) utility scores:
\[
\pi_i^{t+1}(a) = \frac{\exp\left(v_i^{t+1}(a)\right)}{\sum_{a' \in A_i} \exp\left(v_i^{t+1}(a')\right)} \quad \forall a \in \mathcal{A}_i,
\]
where \( v_i^{t+1}(a) := \eta \sum_{t'=1}^{t} \tilde{U}_i(a, a_{-i}^{t'}) \), completing the proof.
\end{proof}

\begin{lemma}\label{lemma:iteration zero}
Let \( t \geq 1 \) and fix agent \( i \). Suppose \( \pi_i^t \) and \( \pi_i^{t+1}  \) are the policy updates produced by Algorithm~\ref{algorithm} at iteration \(t\) and \(t+1\) respectively. Then for any pair of distributions \( \pi, \pi'\), the following identity holds:
\[
\left\langle -\eta U_i^t + \nabla \varphi(\pi_i^{t+1}) - \nabla \varphi(\pi_i^t), \pi - \pi' \right\rangle = 0.
\]
\end{lemma}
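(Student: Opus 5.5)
The plan is to reduce the identity to the first-order optimality conditions of the two consecutive entropy-regularized (FTRL) problems characterized in the preceding lemmas. Those lemmas show that $\pi_i^t$ and $\pi_i^{t+1}$ are softmax distributions. Hence both lie in the relative interior of the simplex $\Delta(\mathcal{A}_i)$, and the only active constraint is $\sum_a \pi(a)=1$.

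Write $\pi_i^{t+1}=\argmax_\pi\{\sum_{t'\le t}U_i^{t'}(\pi)-\tfrac1\eta\varphi(\pi)\}$ and $\pi_i^{t}=\argmax_\pi\{\sum_{t'\le t-1}U_i^{t'}(\pi)-\tfrac1\eta\varphi(\pi)\}$, treating $U_i^{t'}$ as the linear functional $\pi\mapsto\langle U_i^{t'},\pi\rangle$ with vector $U_i^{t'}=(U_i(a,a_{-i}^{t'}))_{a\in\mathcal{A}_i}$.

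The first step applies the KKT conditions on the simplex. Since both maximizers are interior, there exist scalars $\lambda_{t+1},\lambda_t$ such that
\[
\eta\sum_{t'\le t}U_i^{t'}-\nabla\varphi(\pi_i^{t+1})=\lambda_{t+1}\mathbf{1},\qquad \eta\sum_{t'\le t-1}U_i^{t'}-\nabla\varphi(\pi_i^{t})=\lambda_{t}\mathbf{1}.
\]
The same conclusion can be checked directly, with no appeal to KKT: $\nabla\varphi(\pi)(a)=\log\pi(a)+1$, and taking the log of the softmax formula gives $\log\pi_i^{t+1}(a)=\eta\sum_{t'\le t}U_i(a,a_{-i}^{t'})-\log Z_{t+1}$. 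The possible shift by $\min U_i$ from the second lemma only adds a constant vector, so it does not matter here.

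The second step subtracts the two relations. This gives
\[
\eta U_i^t-\nabla\varphi(\pi_i^{t+1})+\nabla\varphi(\pi_i^t)=(\lambda_{t+1}-\lambda_t)\mathbf{1}
\]
(up to sign, which is all that matters). Therefore $-\eta U_i^t+\nabla\varphi(\pi_i^{t+1})-\nabla\varphi(\pi_i^t)=c\,\mathbf{1}$ for a scalar $c=\log Z_t-\log Z_{t+1}$.

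The third step concludes. For any two distributions $\pi,\pi'$ we have $\langle \mathbf{1},\pi-\pi'\rangle=1-1=0$, so the inner product vanishes.

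The main obstacle is bookkeeping rather than mathematics. One must align the indexing of Algorithm~\ref{algorithm}, where $\pi_i^t\propto\exp\{\eta U_i^{t-1}\}$ and the cumulative score $U_i^{t}$ absorbs the round-$t$ increment $r_i^tU_{y_i}(a,a_{-i}^t)$, with the lemma's notation, in which $U_i^t$ denotes the single-round utility vector. I would state explicitly that in this lemma $U_i^t$ means the per-round increment $U_i(\cdot,a_{-i}^t)$, so that the difference of consecutive log-policies is exactly $\eta U_i^t$ up to a constant. I would also note that interiority is what makes the multiplier a pure multiple of $\mathbf{1}$, and interiority holds because the entries are exponentials of finite utilities.
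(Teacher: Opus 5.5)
Your proposal is correct and takes essentially the same route as the paper: write the first-order (KKT) conditions of the two consecutive entropy-regularized problems defining $\pi_i^t$ and $\pi_i^{t+1}$, subtract them, and use that the leftover term is a multiple of $\mathbf{1}$, hence orthogonal to $\pi-\pi'$. The paper works with time-averaged utilities $\bar U_i^t$ and a $1/(\eta t)$ scaling, and its rescaling algebra involves $1/(t-1)$ factors that are undefined at $t=1$; your use of unnormalized cumulative sums avoids this, makes the subtraction immediate, and covers $t=1$. Your reading of $U_i^t$ as the per-round vector $U_i(\cdot,a_{-i}^t)$, rather than the algorithm's cumulative score, is the right one.
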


\begin{proof}
We analyze the policy update dynamics via the optimality conditions associated with the mirror descent updates under negative entropy regularization. Define the empirical utility vectors at rounds \( t-1 \) and \( t \) as
\[
\bar{U}_i^{t-1} := \frac{1}{t-1} \sum_{s=1}^{t-1} u_i^s, \qquad \bar{U}_i^t := \frac{1}{t} \sum_{s=1}^{t} u_i^s.
\]
From the KKT conditions of the optimization problems defining \( \pi_i^{t} \) and \( \pi_i^{t+1} \), we know that:
\[
\left\langle -\bar{U}_i^t + \frac{1}{\eta t} \nabla \varphi(\pi_i^{t+1}), \pi - \pi' \right\rangle = 0, \quad 
\left\langle -\bar{U}_i^{t-1} + \frac{1}{\eta (t-1)} \nabla \varphi(\pi_i^t), \pi - \pi' \right\rangle = 0.
\]
Subtracting the second equation from the first gives:
\begin{align*}
0 = \left\langle 
\bar{U}_i^{t-1} - \bar{U}_i^t 
+ \frac{1}{\eta t} \nabla \varphi(\pi_i^{t+1}) 
- \frac{1}{\eta(t-1)} \nabla \varphi(\pi_i^t), \; 
\pi - \pi' \right\rangle.
\end{align*}

Next, observe that:
\[
\bar{U}_i^t = \frac{t-1}{t} \bar{U}_i^{t-1} + \frac{1}{t} U_i^t
\quad \Rightarrow \quad
\bar{U}_i^t - \bar{U}_i^{t-1} = -\frac{1}{t} \bar{U}_i^{t-1} + \frac{1}{t} U_i^t,
\quad \text{or} \quad 
\bar{U}_i^{t-1} - \bar{U}_i^t = \frac{1}{t} (\bar{U}_i^{t-1} - U_i^t).
\]
Substituting this into the previous expression yields:
\[
0 = \left\langle 
\frac{1}{t} (\bar{U}_i^{t-1} - u_i^t)
+ \frac{1}{\eta t} \nabla \varphi(\pi_i^{t+1}) 
- \frac{1}{\eta(t-1)} \nabla \varphi(\pi_i^t),
\; \pi - \pi' \right\rangle.
\]
Rewriting \( \bar{U}_i^{t-1} = \bar{U}_i^t - \frac{1}{t}(u_i^t - \bar{U}_i^t) \) and simplifying, we reach:
\[
0 = \left\langle 
\frac{-1}{t-1} U_i^t 
+ \frac{1}{\eta(t-1)} \nabla \varphi(\pi_i^{t+1}) 
- \frac{1}{\eta(t-1)} \nabla \varphi(\pi_i^t),
\; \pi - \pi' \right\rangle.
\]
Multiplying through by \( \eta(t - 1) \) gives the desired result:
\[
\left\langle -\eta U_i^t + \nabla \varphi(\pi_i^{t+1}) - \nabla \varphi(\pi_i^t), \pi - \pi' \right\rangle = 0.
\]
\end{proof}

\begin{lemma}
Let \( i \) be any agent and \( t \geq 1 \). Suppose that the policies \( \pi_i^t \) and \( \pi_i^{t+1} \) are the successive updates obtained from Algorithm~\ref{algorithm}. Then, for any policy \( \pi  \), the following identity holds:
\[
\left\langle -U_i^t,\; \pi - \pi_i^{t+1} \right\rangle
= \frac{1}{\eta} \left( D_{\mathrm{KL}}(\pi \| \pi_i^t) - D_{\mathrm{KL}}(\pi \| \pi_i^{t+1}) + D_{\mathrm{KL}}(\pi_i^{t+1} \| \pi_i^t) \right).
\]
\end{lemma}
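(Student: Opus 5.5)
The plan is to obtain the identity from the first-order optimality relation in Lemma~\ref{lemma:iteration zero}, followed by the three-point identity for Bregman divergences of the negative entropy $\varphi$. Three facts drive the argument. First, on the probability simplex the Bregman divergence of $\varphi$ is
\[
D_\varphi(p\|q)=\varphi(p)-\varphi(q)-\langle\nabla\varphi(q),p-q\rangle ,
\]
and this equals $D_{\mathrm{KL}}(p\|q)$. The reason is that $\nabla\varphi(q)=\log q+\mathbf{1}$ and $\langle \mathbf{1},p-q\rangle=0$. Second, for the same reason, adding a constant vector to $U_i^t$ does not change either side, so no normalisation issues arise. Third, Lemma~\ref{lemma:iteration zero} holds for \emph{any} pair $\pi,\pi'$, so I may take $\pi'=\pi_i^{t+1}$.

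\textbf{Step 1.} Instantiating Lemma~\ref{lemma:iteration zero} with $\pi'=\pi_i^{t+1}$ and dividing by $\eta$ gives
\[
\langle -U_i^t,\pi-\pi_i^{t+1}\rangle=\tfrac{1}{\eta}\langle \nabla\varphi(\pi_i^t)-\nabla\varphi(\pi_i^{t+1}),\,\pi-\pi_i^{t+1}\rangle .
\]
This reduces the statement to a purely geometric identity with no utilities left in it.

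\textbf{Step 2.} Expand the three divergences $D_\varphi(\pi\|\pi_i^t)$, $D_\varphi(\pi\|\pi_i^{t+1})$ and $D_\varphi(\pi_i^{t+1}\|\pi_i^t)$ via the Bregman formula. In the signed combination appearing in the statement, the $\varphi(\cdot)$ terms telescope. What remains is a combination of the two inner products $\langle\nabla\varphi(\pi_i^t),\cdot\rangle$ and $\langle\nabla\varphi(\pi_i^{t+1}),\pi-\pi_i^{t+1}\rangle$. The pieces involving $\nabla\varphi(\pi_i^t)$ merge, since $(\pi-\pi_i^t)-(\pi_i^{t+1}-\pi_i^t)=\pi-\pi_i^{t+1}$. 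Matching the result against the right-hand side of Step 1 and replacing each $D_\varphi$ by $D_{\mathrm{KL}}$ then yields the claimed expression.

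\textbf{Main obstacle.} The main difficulty is the sign bookkeeping in Step 2. The standard three-point identity reads
\[
\langle\nabla\varphi(q')-\nabla\varphi(q),\,p-q'\rangle=D(p\|q)-D(p\|q')-D(q'\|q),
\]
with a \emph{minus} sign on the last term. Combined with the sign convention of Lemma~\ref{lemma:iteration zero}, this must be checked carefully term by term against the stated $+D_{\mathrm{KL}}(\pi_i^{t+1}\|\pi_i^t)$. I will therefore carry out the expansion explicitly rather than quote the identity, and track which of $\nabla\varphi(\pi_i^t)$ and $\nabla\varphi(\pi_i^{t+1})$ appears with which sign in Lemma~\ref{lemma:iteration zero}.

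If the direct expansion produces the opposite sign on the last divergence, I will not silently adopt a different identity. I will report the discrepancy explicitly as a sign issue inherited from the convention in Lemma~\ref{lemma:iteration zero}. I will also note that the regret bound in Theorem~\ref{thm:no regret} only uses this identity after summing over $t$ (so that the first two terms telescope to $D_{\mathrm{KL}}(\pi\|\pi_i^0)$) and after controlling the $D_{\mathrm{KL}}(\pi_i^{t+1}\|\pi_i^t)$ term. Which way that term is bounded there depends on this same sign, so any discrepancy must be carried into that step and resolved there rather than assumed away.
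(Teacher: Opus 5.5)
Your route is the same as the paper's: instantiate Lemma~\ref{lemma:iteration zero} at $\pi'=\pi_i^{t+1}$, then apply the three-point identity for $\varphi$. Step~1 is correct. Step~2, however, cannot ``yield the claimed expression'', because the claimed identity is false.

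Your own description of Step~2 already shows this. With the statement's signs $D_{\mathrm{KL}}(\pi\|\pi_i^t)-D_{\mathrm{KL}}(\pi\|\pi_i^{t+1})+D_{\mathrm{KL}}(\pi_i^{t+1}\|\pi_i^t)$, the $\varphi$-terms do not telescope; they leave $2\varphi(\pi_i^{t+1})-2\varphi(\pi_i^t)$. Likewise, the $\nabla\varphi(\pi_i^t)$-pieces pair with $(\pi-\pi_i^t)+(\pi_i^{t+1}-\pi_i^t)$, not with $(\pi-\pi_i^t)-(\pi_i^{t+1}-\pi_i^t)$. Both cancellations you describe require a minus sign on the last divergence, and the explicit expansion gives
\[
\langle \nabla\varphi(\pi_i^{t+1})-\nabla\varphi(\pi_i^{t}),\,\pi-\pi_i^{t+1}\rangle
= D_{\mathrm{KL}}(\pi\|\pi_i^t)-D_{\mathrm{KL}}(\pi\|\pi_i^{t+1})-D_{\mathrm{KL}}(\pi_i^{t+1}\|\pi_i^t).
\]
Combined with Step~1, what you actually prove is
\[
\langle -U_i^t,\,\pi-\pi_i^{t+1}\rangle
=\frac{1}{\eta}\bigl(-D_{\mathrm{KL}}(\pi\|\pi_i^t)+D_{\mathrm{KL}}(\pi\|\pi_i^{t+1})+D_{\mathrm{KL}}(\pi_i^{t+1}\|\pi_i^t)\bigr).
\]
This disagrees with the statement whenever $D_{\mathrm{KL}}(\pi\|\pi_i^t)\neq D_{\mathrm{KL}}(\pi\|\pi_i^{t+1})$. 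The quickest counterexample is $\pi=\pi_i^{t+1}$: the left side is $0$, while the stated right side equals $\frac{2}{\eta}D_{\mathrm{KL}}(\pi_i^{t+1}\|\pi_i^t)$, which is positive whenever the policy moves.

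Your contingency plan also misplaces the error. Relative to the stated left side $\langle -U_i^t,\cdot\rangle$, the sign of $D_{\mathrm{KL}}(\pi_i^{t+1}\|\pi_i^t)$ is right and the first two signs are reversed. Equivalently, the valid identity is
\[
\langle U_i^t,\pi-\pi_i^{t+1}\rangle=\frac{1}{\eta}\bigl(D_{\mathrm{KL}}(\pi\|\pi_i^t)-D_{\mathrm{KL}}(\pi\|\pi_i^{t+1})-D_{\mathrm{KL}}(\pi_i^{t+1}\|\pi_i^t)\bigr).
\]
The problem is not inherited from Lemma~\ref{lemma:iteration zero}. Its conclusion is correct for the exponential-weights update, since $\log\pi_i^{t+1}-\log\pi_i^t=\eta U_i^t+c\mathbf{1}$ for a scalar $c$. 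Moreover, the test $\pi=\pi_i^{t+1}$ makes the left side vanish regardless of the update rule, so no convention can rescue the statement.

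The paper's own proof reaches the stated form only by quoting the three-point identity with the signs of $D_{\mathrm{KL}}(\pi\|\pi_i^t)$ and $D_{\mathrm{KL}}(\pi\|\pi_i^{t+1})$ flipped. Your decision to expand explicitly rather than quote is therefore the right one. The correct outcome, though, is the corrected identity together with an explicit note that the lemma as written is false. The corrected form is also the one the regret analysis needs: the term $-\frac{1}{\eta}D_{\mathrm{KL}}(\pi_i^{t+1}\|\pi_i^t)$ is what absorbs the stability term $\langle U_i^t,\pi_i^{t+1}-\pi_i^t\rangle$ via Pinsker's inequality.
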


\begin{proof}
We begin by recalling from Lemma~\ref{lemma:iteration zero} that the following condition is satisfied by the update rule for any pair \( \pi, \pi' \):
\[
\left\langle -\eta U_i^t + \nabla \varphi(\pi_i^{t+1}) - \nabla \varphi(\pi_i^t),\; \pi - \pi' \right\rangle = 0.
\]
By choosing \( \pi' = \pi_i^{t+1} \), this reduces to:
\[
\left\langle -\eta U_i^t + \nabla \varphi(\pi_i^{t+1}) - \nabla \varphi(\pi_i^t),\; \pi - \pi_i^{t+1} \right\rangle = 0.
\]
Rearranging terms gives:
\[
\eta \left\langle -U_i^t, \pi - \pi_i^{t+1} \right\rangle
= \left\langle \nabla \varphi(\pi_i^t) - \nabla \varphi(\pi_i^{t+1}),\; \pi - \pi_i^{t+1} \right\rangle.
\]
We now invoke the identity for the Bregman divergence associated with the negative entropy function \( \varphi \), which yields:
\[
\left\langle \nabla \varphi(\pi_i^t) - \nabla \varphi(\pi_i^{t+1}),\; \pi - \pi_i^{t+1} \right\rangle
= D_{\mathrm{KL}}(\pi \| \pi_i^t) - D_{\mathrm{KL}}(\pi \| \pi_i^{t+1}) + D_{\mathrm{KL}}(\pi_i^{t+1} \| \pi_i^t).
\]

Putting everything together, we obtain:
\[
\eta \left\langle -U_i^t, \pi - \pi_i^{t+1} \right\rangle
= D_{\mathrm{KL}}(\pi \| \pi_i^t) - D_{\mathrm{KL}}(\pi \| \pi_i^{t+1}) + D_{\mathrm{KL}}(\pi_i^{t+1} \| \pi_i^t).
\]

Dividing both sides by \( \eta \) completes the proof.
\end{proof}

\begin{lemma}\label{lemma:bound}
For any agent \( i \) and round \( t \), the following upper bound holds for all policies \( \pi \):
\[
U_i^t(\pi) - U_i^t(\pi_i^t)
\leq \frac{\eta \|U_i^t\|_\infty^2}{4}
- \frac{1}{\eta} D_{\mathrm{KL}}(\pi \| \pi_i^{t+1})
+ \frac{1}{\eta} D_{\mathrm{KL}}(\pi \| \pi_i^t).
\]
\end{lemma}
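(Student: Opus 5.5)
We decompose the one-step regret around the next iterate $\pi_i^{t+1}$ and write
\[
U_i^t(\pi) - U_i^t(\pi_i^t) = \langle U_i^t, \pi - \pi_i^{t+1}\rangle + \langle U_i^t, \pi_i^{t+1} - \pi_i^t\rangle .
\]
The first term is handled exactly by the preceding lemma (the three-point identity). It gives
\[
\langle U_i^t, \pi - \pi_i^{t+1}\rangle = \tfrac{1}{\eta}\bigl(D_{\mathrm{KL}}(\pi\|\pi_i^t) - D_{\mathrm{KL}}(\pi\|\pi_i^{t+1}) - D_{\mathrm{KL}}(\pi_i^{t+1}\|\pi_i^t)\bigr).
\]
Note the sign: the identity there is stated for $\langle -U_i^t,\pi-\pi_i^{t+1}\rangle$. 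Re-deriving it directly from the optimality condition of Lemma~\ref{lemma:iteration zero}, with $\nabla\varphi(\pi_i^{t+1})-\nabla\varphi(\pi_i^t)=\eta U_i^t$ modulo constants, the correct form for a maximization update carries $-D_{\mathrm{KL}}(\pi_i^{t+1}\|\pi_i^t)$. I will use that form and check it against the standard Bregman three-point identity.

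This already produces the two KL terms $\frac{1}{\eta}D_{\mathrm{KL}}(\pi\|\pi_i^t)-\frac{1}{\eta}D_{\mathrm{KL}}(\pi\|\pi_i^{t+1})$ in the target. It remains to show
\[
\langle U_i^t, \pi_i^{t+1}-\pi_i^t\rangle - \tfrac{1}{\eta}D_{\mathrm{KL}}(\pi_i^{t+1}\|\pi_i^t) \le \tfrac{\eta\|U_i^t\|_\infty^2}{4}.
\]
Two routes are available. The first is the Fenchel dual of the entropy: the supremum over $q$ of $\langle U,q-p\rangle - \frac1\eta D_{\mathrm{KL}}(q\|p)$ equals $\frac1\eta\log \mathbb{E}_{a\sim p}[e^{\eta (U(a)-\mathbb{E}_p U)}]$. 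The second is the more elementary Pinsker plus Hölder argument: $\langle U, q-p\rangle \le \|U\|_\infty\|q-p\|_1$ and $D_{\mathrm{KL}}(q\|p)\ge \frac12\|q-p\|_1^2$. Maximizing $x\|U\|_\infty - \frac{x^2}{2\eta}$ over $x\ge0$ gives $\frac{\eta\|U\|_\infty^2}{2}$, which is off by a factor of $2$.

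To recover the constant $\frac14$, I would use the centered oscillation. Because $\pi_i^{t+1}-\pi_i^t$ sums to zero, $U_i^t$ may be shifted by any constant, so the relevant quantity is $\frac12(\max U - \min U)$. The log-moment-generating-function route with Hoeffding's lemma then gives $\frac1\eta\cdot\frac{\eta^2(\max U-\min U)^2}{8}$. When utilities lie in $[0,\|U\|_\infty]$ (for example after the nonnegative shift used in the shifted-utility lemma), this is at most $\frac{\eta\|U_i^t\|_\infty^2}{8}\le \frac{\eta\|U_i^t\|_\infty^2}{4}$. Otherwise the range is at most $2\|U\|_\infty$, and the bound becomes $\frac{\eta\|U\|_\infty^2}{2}$.

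The main obstacle is matching the constant $\frac14$ and keeping the sign of the $D_{\mathrm{KL}}(\pi_i^{t+1}\|\pi_i^t)$ term consistent. I would first try the Hoeffding or dual route with the nonnegative shifted utilities $\tilde U_i$, where the range is at most $\|U\|_\infty$. I would also note that the earlier identity must be used with the sign that makes $-\frac1\eta D_{\mathrm{KL}}(\pi_i^{t+1}\|\pi_i^t)$ available to absorb the stability term.
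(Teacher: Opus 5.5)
Your decomposition around $\pi_i^{t+1}$ and your use of the three-point identity match the paper's proof. Your treatment of the stability term is different, and yours is the sound one.

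The paper writes the identity with $+D_{\mathrm{KL}}(\pi_i^{t+1}\|\pi_i^t)$; you are right that the sign is minus. It then bounds $\langle U_i^t,\pi_i^{t+1}-\pi_i^t\rangle\le\frac{\eta}{4}\|U_i^t\|_\infty^2+\frac{2}{\eta}D_{\mathrm{KL}}(\pi_i^{t+1}\|\pi_i^t)$ via Young and Pinsker, and silently drops the leftover KL term. With the correct sign, a term $+\frac{1}{\eta}D_{\mathrm{KL}}(\pi_i^{t+1}\|\pi_i^t)$ survives. Rebalancing Young's inequality to cancel it yields exactly your $\frac{\eta}{2}\|U_i^t\|_\infty^2$.

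Your dual route is sharper because it is exact. Since $\pi_i^{t+1}\propto\pi_i^t e^{\eta u}$ attains the supremum, for every $\pi$
\[
U_i^t(\pi)-U_i^t(\pi_i^t)=\tfrac1\eta\bigl(D_{\mathrm{KL}}(\pi\|\pi_i^t)-D_{\mathrm{KL}}(\pi\|\pi_i^{t+1})\bigr)+\tfrac1\eta\log\mathbb{E}_{a\sim\pi_i^t}\bigl[e^{\eta(u(a)-\mathbb{E}_{\pi_i^t}u)}\bigr],
\]
where $u$ is the round-$t$ utility vector. So the lemma is equivalent to the log-MGF term being at most $\frac{\eta}{4}\|u\|_\infty^2$. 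That is false for a general bounded $u$. Take two actions, $\pi_i^t=(\frac12,\frac12)$ (e.g.\ a uniform $\pi_i^0$) and $u=(1,-1)$. The term equals $\frac{1}{\eta}\log\cosh\eta=\frac{\eta}{2}-O(\eta^3)$, about $0.0499>0.025$ at $\eta=0.1$. Your fallback constant $\frac12$ is therefore the truth in that generality, not a weakness of your method.

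The one genuine gap is your justification of the $\frac14$ case. The nonnegative shift does not help. It leaves $\max u-\min u$ unchanged, and the shifted vector's sup norm equals that range, which can be $2\|U_i^t\|_\infty$. Hoeffding then gives $\frac{\eta}{8}\|\tilde U_i^t\|_\infty^2$, and that is not controlled by $\frac{\eta}{4}\|U_i^t\|_\infty^2$.

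What actually delivers $\frac14$ (even $\frac18$) is a structural property of ALIGN's update, which you should state as a hypothesis and invoke. The round-$t$ increment in Algorithm~\ref{algorithm} is $r_i^t\,U_{y_i}(a,a_{-i}^t)$, with a single scalar $r_i^t$ and $U_{y_i}\in[0,1]$ (a sample frequency). So all entries share the sign of $r_i^t$, and $\max u-\min u\le\|u\|_\infty$. More generally, $\max u-\min u\le\sqrt{2}\,\|u\|_\infty$ suffices.

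If the comparator utilities $U_i(a',a_{-i}^t)$ may carry counterfactual rankings of different signs, the correct statement has $\frac{\eta}{2}\|U_i^t\|_\infty^2$. Theorem~\ref{thm:no regret} then becomes $(\frac12+D_{\mathrm{KL}}(\pi\|\pi_i^0))\sqrt{T}$.
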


\begin{proof}
We begin by illustrating Lemma~\ref{lemma:iteration zero}, which characterizes the optimality condition at each step via:
\[
\left\langle -\eta U_i^t + \nabla \varphi(\pi_i^{t+1}) - \nabla \varphi(\pi_i^t),\; \pi - \pi_i^{t+1} \right\rangle = 0 \quad \text{for all } \pi.
\]
Rearranging gives:
\[
\left\langle U_i^t, \pi - \pi_i^{t+1} \right\rangle
= \frac{1}{\eta} \left\langle \nabla \varphi(\pi_i^{t+1}) - \nabla \varphi(\pi_i^t),\; \pi - \pi_i^{t+1} \right\rangle.
\]
We now apply the three-point identity for Bregman divergence induced by the negative entropy regularizer:
\[
\left\langle \nabla \varphi(\pi_i^{t+1}) - \nabla \varphi(\pi_i^t),\; \pi - \pi_i^{t+1} \right\rangle
= D_{\mathrm{KL}}(\pi \| \pi_i^t) - D_{\mathrm{KL}}(\pi \| \pi_i^{t+1}) + D_{\mathrm{KL}}(\pi_i^{t+1} \| \pi_i^t).
\]
Substituting this gives:
\[
\left\langle U_i^t,\; \pi - \pi_i^{t+1} \right\rangle
= \frac{1}{\eta} \left( D_{\mathrm{KL}}(\pi \| \pi_i^t) - D_{\mathrm{KL}}(\pi \| \pi_i^{t+1}) + D_{\mathrm{KL}}(\pi_i^{t+1} \| \pi_i^t) \right).
\]
To relate this to \( U_i^t(\pi) - U_i^t(\pi_i^t) \), we subtract and add \( U_i^t(\pi_i^t) \), and observe:
\[
U_i^t(\pi) - U_i^t(\pi_i^t)
= \left\langle U_i^t,\; \pi - \pi_i^{t+1} \right\rangle + \left\langle U_i^t,\; \pi_i^{t+1} - \pi_i^t \right\rangle.
\]
Combining with the expression above, we have:
\begin{align*}
U_i^t(\pi) - U_i^t(\pi_i^t)
&= \frac{1}{\eta} \left( D_{\mathrm{KL}}(\pi \| \pi_i^t) - D_{\mathrm{KL}}(\pi \| \pi_i^{t+1}) + D_{\mathrm{KL}}(\pi_i^{t+1} \| \pi_i^t) \right) + \left\langle U_i^t,\; \pi_i^{t+1} - \pi_i^t \right\rangle.
\end{align*}
Next, apply Young’s inequality to the final term:
\[
\left\langle U_i^t, \pi_i^{t+1} - \pi_i^t \right\rangle
\leq \frac{\eta}{4} \|U_i^t\|_\infty^2 + \frac{1}{\eta} \|\pi_i^{t+1} - \pi_i^t\|_1^2.
\]
Finally, use the strong convexity of KL divergence to bound:
\[
\|\pi_i^{t+1} - \pi_i^t\|_1^2 \leq 2 D_{\mathrm{KL}}(\pi_i^{t+1} \| \pi_i^t),
\]
and thus:
\[
\left\langle U_i^t,\; \pi_i^{t+1} - \pi_i^t \right\rangle
\leq \frac{\eta}{4} \|U_i^t\|_\infty^2 + \frac{2}{\eta} D_{\mathrm{KL}}(\pi_i^{t+1} \| \pi_i^t).
\]
Putting all terms together, we arrive at:
\[
U_i^t(\pi) - U_i^t(\pi_i^t)
\leq \frac{\eta}{4} \|U_i^t\|_\infty^2 + \frac{1}{\eta} D_{\mathrm{KL}}(\pi \| \pi_i^t) - \frac{1}{\eta} D_{\mathrm{KL}}(\pi \| \pi_i^{t+1}),
\]
as desired.
\end{proof}

\begin{proof}[Proof of Theorem~\ref{thm:no regret}]
Summing Lemma~\ref{lemma:bound} over \( t = 0 \) to \( T \), the KL divergence terms telescope. Since \( D_{\mathrm{KL}}(\pi \| \pi_i^{T+1}) \geq 0 \), we have:
\begin{align*}
\sum_{t=1}^T U_i^t(\pi) - U_i^t(\pi_i^t)
&\leq \frac{\eta}{4} \sum_{t=1}^T \|U_i^t\|_\infty^2 + \frac{1}{\eta} D_{\mathrm{KL}}(\pi \| \pi_i^0) \\
&\leq \frac{\eta T}{4} + \frac{D_{\mathrm{KL}}(\pi \| \pi_i^0)}{\eta},
\end{align*}
where the last inequality uses \( \|U_i^t\|_\infty \leq 1 \). Setting \(\eta=\frac{1}{\sqrt{T}}\) completes the proof.
\end{proof}

\subsection{Proof of Theorem \ref{thm:convergence}}
In this section, we present the proof of Theorem \ref{thm:convergence}.

\begin{proof}[Proof of Theorem \ref{thm:convergence}]
Fix an agent $i \in [N]$, and any policy $\pi^* $, and introduce the discrete-time stochastic process
\[
w^t := \left( U_i(\pi^*, \pi_{-i}^t) - U_i(\pi_i^t, \pi_{-i}^t) \right) - \left( U_i(\pi^*, a_{-i}^t) - U_i(\pi_i^t, a_{-i}^t) \right).
\]
Since each opponent player $j \ne i$ plays according to Algorithm~\ref{algorithm}, the answers $a_{-i}^t$ at each round $t$ is sampled from the joint policy $\pi_{-i}^t$. Therefore, $w^t$ is a martingale difference sequence. Furthermore, by expanding the definition of $U_i$, the absolute value of $w^t$ satisfies
\begin{align*}
|w^t| 
&= \left| \left( U_i(\pi^*, \pi_{-i}^t) - U_i(\pi_i^t, \pi_{-i}^t) \right) - \left( U_i(\pi^*, a_{-i}^t) - U_i(\pi_i^t, a_{-i}^t) \right) \right| \\
&\leq \left| U_i(\pi^*, \pi_{-i}^t) - U_i(\pi^*, a_{-i}^t) \right| - \left| U_i(\pi_i^t, \pi_{-i}^t) - U_i(\pi_i^t, a_{-i}^t) \right| \\
&\leq 2.
\end{align*}

Hence, using Azuma-Hoeffding’s inequality, for any $\delta \in (0,1)$,
\begin{align*}
1 - \delta 
&\leq \mathbb{P} \left[ \sum_{t=1}^T w^t \leq  \sqrt{8T \log \frac{1}{\delta}} \right] \\
&= \mathbb{P} \left[ 
\left( \sum_{t=1}^T U_i(\pi^*, \pi_{-i}^t) - \sum_{t=1}^T U_i(\pi_i^t, \pi_{-i}^t) \right)
-
\left( \sum_{t=1}^T U_i(\pi^*, a_{-i}^t) - \sum_{t=1}^T U_i(\pi_i^t, a_{-i}^t) \right)
\leq \sqrt{8T \log \frac{1}{\delta}} \right] \\
&= \mathbb{P} \left[ 
\sum_{t=1}^T U_i(\pi^*, \pi_{-i}^t) - \sum_{t=1}^T U_i(\pi_i^t, \pi_{-i}^t)
\leq R_i^T + \sqrt{8T \log \frac{1}{\delta}} \right],
\end{align*}
Since the above expression holds for any $\pi^*$, in particular, using the union bound,
\[
\mathbb{P} \left[
\max_{\pi^* } \sum_{t=1}^T U_i(\pi^*, \pi_{-i}^t) - \sum_{t=1}^T U_i(\pi_i^t, \pi_{-i}^t)
\leq R_i^T + \sqrt{8T \log \frac{|\mathcal{A}_i|}{\delta}}
\right] \geq 1 - \delta.
\]
Summing for $i \in \{1, \dots, N\}$ and using the union bound, we can further write
\begin{align*}
\mathbb{P} \left[
\sum_{i=1}^N \max_{\pi_i^* } \left\{ \sum_{t=1}^T U_i(\pi_i^*, \pi_{-i}^t) \right\}
- \sum_{t=1}^T \sum_{i=1}^N U_i(\pi_1^t, \dots, \pi_N^t)
\leq \sum_{i=1}^N R_i^T + N\sqrt{8T \log \frac{\max_i |\mathcal{A}_i|}{\delta}} 
\right] \geq 1 - N\delta.
\end{align*}

Dividing by $T$ and noting that for any player $i \in \{1, \dots, N\}$,
\[
\frac{1}{T} \sum_{t=1}^T U_i(\pi_i^*, \pi_{-i}^t) 
= U_i\left( \pi_i^*, \frac{1}{T} \sum_{t=1}^T \pi_{-i}^t \right)
= U_i\left( \pi_i^*, \bar{\pi}_{-i}^T \right),
\]
further yields
\begin{align*}
\mathbb{P} \left[
\sum_{i=1}^N \max_{\pi_i^*} U_i(\pi_i^*, \bar{\pi}_{-i}^T) 
- \frac{1}{T} \sum_{t=1}^T \sum_{i=1}^N U_i(\pi_1^t, \dots, \pi_N^t)
\leq \sum_{i=1}^N \frac{R_i^T}{T} + N\sqrt{ \frac{8}{T} \log \frac{\max_i |\mathcal{A}_i|}{\delta} }
\right] \geq 1 - N\delta.
\end{align*}

We now analyze the term
\(
\Delta := -\frac{1}{T} \left( \sum_{t=1}^T \sum_{i=1}^N U_i(\pi_1^t, \dots, \pi_N^t) \right),
\)
which can be expressed as
\begin{align*}
\Delta 
= -\frac{1}{T} \sum_{t=1}^T \sum_{i=1}^N U_i(\pi_1^t, \dots, \pi_N^t)
= -\frac{1}{T} \sum_{t=1}^T \sum_{i=1}^N U_i(\pi_1^t, \dots, \pi_N^t) 
= - \sum_{i=1}^N U_i\left( \bar{\pi}_1^T, \dots, \bar{\pi}_N^T \right).
\end{align*}
% where $\bar{\pi}_i^T = \frac{1}{T} \sum_{t=1}^T \pi_i^t$ denotes the average strategy of player $i$.
Therefore we have
\[
\mathbb{P} \left[
\sum_{i=1}^N \max_{\pi_i^*}
\left\{ U_i(\pi_i^*, \bar{\pi}_{-i}^T) - U_i(\bar{\pi}_i^T, \bar{\pi}_{-i}^T) \right\}
\leq \sum_{i=1}^N \frac{R_i^T}{T} + N\sqrt{ \frac{8}{T} \log \frac{ \max_i |\mathcal{A}_i| }{ \delta } }
\right] \geq 1 - N\delta.
\]

Since
\(
\max_{\pi_i^*} \left\{ U_i(\pi_i^*, \bar{\pi}_{-i}^T) - U_i(\bar{\pi}_i^T, \bar{\pi}_{-i}^T) \right\} \geq 0\) for all \(i\),
the inequality above  implies that
\[
\mathbb{P} \left[
\max_{i \in [N]} \max_{\pi_i^* }
\left\{ U_i(\pi_i^*, \bar{\pi}_{-i}^T) - U_i(\bar{\pi}_i^T, \bar{\pi}_{-i}^T) \right\}
\leq \sum_{i=1}^N \frac{R_i^T}{T} + N\sqrt{ \frac{8}{T} \log \frac{ \max_i |\mathcal{A}_i| }{ \delta } }
\right] \geq 1 - N\delta.
\]

which is equivalent to the statement after making the variable substitution $\delta := \delta'/N$.
\end{proof}

\subsection{Necessity of Assumption \ref{assump:structure}}
\label{appendix:necessity assumptions}

Recall our definition that a mechanism $M$, under agent strategies $\sigma$, is said to be $(\rho, \gamma)$-approximate if its expected outcome satisfies:
\[
\rho\mathbb{E}[U_{M,\sigma}] + \gamma \;\geq\; \mathbb{E}[U_{\max}],
\]
where $\rho$ and $\gamma$ represent the multiplicative and additive approximation factors respectively. Following \citet{shin2023delegating}, we define the \emph{price of anarchy} (PoA) as a measure of the worst-case efficiency loss due to strategic behavior. Specifically, the multiplicative price of anarchy $\text{PoA}_m$ is the smallest $\rho$ such that the mechanism is $(\rho, 0)$-approximate under every Nash equilibrium. Similarly, the additive price of anarchy $\text{PoA}_a$ is the smallest $\gamma$ such that the mechanism is $(1, \gamma)$-approximate under all equilibria. 
In contrast, the \emph{price of stability} (PoS) captures the best-case performance at equilibrium: the multiplicative price of stability $\text{PoS}_m$ is the smallest $\rho$ such that there exists some Nash equilibrium under which the mechanism is $(\rho, 0)$-approximate, and the additive price of stability $\text{PoS}_a$ is similarly defined for $(1, \gamma)$-approximation. 
\begin{lemma}[\cite{shin2023delegating}]\label{lemma:symmetric}
Suppose that both the principal’s utility functions are supported on $[0, L]$ for some $L > 0$. For any $\varepsilon > 0$, there exists a problem instance such that $\text{PoS}_a \geq L - \varepsilon$, i.e. \(\mathbb{E}[U_{M,\sigma}]\leq \varepsilon\).
\end{lemma}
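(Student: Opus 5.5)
The plan is to build a problem instance in which the agents' utility is sharply misaligned with the principal's, and in which the equilibrium forces the principal to accept a near-worthless answer. We take the setup of \citet{shin2023delegating}: the principal's utility $U$ is supported on $[0,L]$. We drop part (ii) of Assumption~\ref{assump:structure}, so one ``super agent'' may hold many more candidates than the others, or draw from a different distribution.

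\textbf{Construction.} Each candidate $\omega$ drawn by the super agent takes one of two forms:
\begin{itemize}
\item with probability $p$, a \emph{bad} answer with $U(\omega)=\varepsilon$ and high agent utility $U_y(\omega)=1$;
\item with probability $1-p$, a \emph{good} answer with $U(\omega)=L$ and tiny agent utility $U_y(\omega)=\delta$.
\end{itemize}
The remaining agents are given only degenerate candidates of principal utility $0$ and negligible agent utility. Their presence therefore never helps the principal.

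\textbf{Equilibrium step.} We fix an arbitrary mechanism $M$. By Lemma~\ref{lemma:single agent} and part (a) of Theorem~\ref{thm:multi-agent-delegation}, it suffices to consider single-proposal mechanisms with some eligible set $R$. Two cases arise.
\begin{itemize}
\item If $R$ excludes the bad answers, the principal receives a good answer only when the super agent holds one and submits it.
\item If $R$ includes the bad answers, the super agent always prefers a bad answer whenever it has one, because $1>\delta$.
\end{itemize}
Taking the number of draws $k$ large makes $\mathbb{P}[\text{no bad answer among the draws}]=(1-p)^k$ negligible. The mechanism is prior-independent and cannot tell $\varepsilon$ from $L$ in advance, so either exclusion or inclusion costs it. Choosing $p$ close to $1$ and $L\cdot(\text{probability the good answer is submitted})\le \varepsilon/2$ gives $\mathbb{E}[U_{M,\sigma}]\le \varepsilon$ at every equilibrium, and in particular at the best one.

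\textbf{Comparison with the benchmark.} Meanwhile $\mathbb{E}[U_{\max}]\ge L(1-p^k)$, which tends to $L$ as $k\to\infty$ with $p$ fixed. The order of quantifiers is therefore: fix $p$, then take $k$ large, then tune $\delta$ and the bad value to $\varepsilon$. With this ordering, $\mathrm{PoS}_a\ge \mathbb{E}[U_{\max}]-\mathbb{E}[U_{M,\sigma}]\ge L-\varepsilon$ after rescaling $\varepsilon$.

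\textbf{Main obstacle.} The hard step is the tension between two requirements on $p$. Bad answers must be likely enough that the agent essentially always has one, yet the maximum over all draws must still be near $L$. These pull in opposite directions, since $\mathbb{P}[\text{some good}]=1-p^k$ while $\mathbb{P}[\text{some bad}]=1-(1-p)^k$. Both tend to $1$ for fixed $p\in(0,1)$ as $k\to\infty$, so this should resolve. The remaining delicate point is showing that restricting $R$ cannot rescue the principal. I will handle it by making the ``bad'' and ``good'' answers indistinguishable to any prior-independent eligible set, so that $R$ must treat them alike.
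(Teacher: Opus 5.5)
The paper does not prove this lemma itself. It imports it from \citet{shin2023delegating}, and its gloss (a super agent submits low-utility answers ``that the principal is forced to accept'') points to the intended setting: a prior-independent mechanism that cannot reject submissions. In that setting the eligible set is the whole answer space, and the principal simply takes the best-ranked proposal. There your super-agent instance is the right one, and the argument is short. The super agent submits a bad answer whenever it holds one, so every equilibrium gives $\mathbb{E}[U_{M,\sigma}]\le \varepsilon+L(1-p)^k$, while $\mathbb{E}[U_{\max}]\ge L(1-p^k)$. Taking, say, $p=1/2$ and $k$ large makes the gap at least $L-2\varepsilon$. Any fixed $p\in(0,1)$ works, so the ``tension'' you flag is not an obstacle.

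The genuine gap is your equilibrium step, where you claim this one fixed instance defeats \emph{every} mechanism. It does not. Consider the single-proposal mechanism with eligible set $\{\omega: U(\omega)\ge L/2\}$. It uses only the known range $[0,L]$ and not the distribution, so it is prior-independent in the paper's sense. It rejects every bad answer. Since $\delta>0$, the super agent strictly prefers submitting a good answer, which is accepted and wins because the other agents' candidates have $U=0$, over submitting a rejected one. Hence in every equilibrium the principal earns $L(1-p^k)$, and the additive loss is at most $\varepsilon p^k$.

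So your case ``$R$ excludes the bad answers'' fails outright. The proposed repair, making good and bad answers ``indistinguishable to any prior-independent eligible set,'' is also unavailable: the two differ exactly in $U$, which the principal observes and ranks by. Worse, on any instance where only one agent has non-degenerate candidates, as in yours, this threshold rule loses at most $L/2$ per realization at its best equilibrium. No tuning of $p$, $k$, $\delta$ or the answer values can therefore rescue the claim for general prior-independent mechanisms.

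To close the proof, adopt the cited convention that a prior-independent mechanism accepts every submission, and then delete the case analysis on $R$ entirely. Separately, Theorem~\ref{thm:multi-agent-delegation}(a) compares the single- and multi-agent problems; it does not reduce an arbitrary multi-agent mechanism to a single-proposal one. The reduction you want is Lemma~\ref{lemma:single agent} applied to the super agent, which is legitimate only because the other agents are degenerate.
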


% \begin{lemma}
% If $R_i \neq \Omega$ for some $i \in [n]$, then the additive price of anarchy can be arbitrarily close to $L$, i.e., for any $\varepsilon > 0$ there exists a problem instance such that $\mathrm{PoA}_a \ge L - \varepsilon$.
% \end{lemma}

\begin{lemma}[\cite{shin2023delegating}]
\label{lemma:symmetric-poaa-lowerbound}
With symmetric agents, there exists no PIM such that 
\(
\text{PoA}_a < \mathbb{E} \left[ U_{\max} - \max_iU_{i,\min} \right],
\) where \(U_{\max}\) denotes the principal utility of the optimal answer among all candidate answers generated by agents, and \(U_{i,\min}\) denotes the principal utility of
agent \(i\)'s worst answer.
\end{lemma}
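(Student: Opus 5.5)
The statement to be proved is Lemma~\ref{lemma:symmetric-poaa-lowerbound}. My plan is a lower-bound construction: I will exhibit a family of symmetric instances on which every prior-independent mechanism (PIM) admits an equilibrium whose principal utility is no better than $\mathbb{E}[\max_i U_{i,\min}]$. For that equilibrium I would have
\[
\mathbb{E}[U_{\max}] - \mathbb{E}[U_{M,\sigma}] \;\geq\; \mathbb{E}\left[U_{\max} - \max_i U_{i,\min}\right],
\]
and since $\text{PoA}_a$ takes the worst case over all equilibria, it is bounded below by this gap.

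\textbf{Step 1.} First I reduce to single-proposal mechanisms, using Lemma~\ref{lemma:single agent} together with part (a) of Theorem~\ref{thm:multi-agent-delegation}. Any mechanism with equilibrium $\sigma$ can then be replaced by a multi-agent single-proposal mechanism that has the same interim allocation. It therefore suffices to treat mechanisms described by eligible sets $R_i$ and a tie-breaking/selection rule over the proposals.

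\textbf{Step 2.} Prior-independence means the eligible sets and the selection rule cannot depend on $D$. I will pick instances where the agent utility $U_y$ is decreasing in $U$ (or arbitrary) on the support, so that each agent prefers its worst candidate for the principal. This is allowed because Assumption~\ref{assump:structure}(iii) is not part of this lemma, which assumes only symmetry.

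\textbf{Step 3.} Next I construct the bad equilibrium. Every agent proposes $\arg\max_{\omega\in\bar\omega_i} U_y(\omega)$ restricted to the mechanism's eligible set, and under my instance this is its worst answer $U_{i,\min}$. I then need to check the best-response property. Because the agents are symmetric and the mechanism cannot tell distributions apart, I will choose $D$ so that any eligible set containing the good answers also contains the bad ones; for example, the bad answers are placed on atoms the mechanism must accept in some other instance with the same support. Given the others' strategies, a unilateral deviation can at most make agent $i$ the winner with a better-for-principal but worse-for-agent answer, so deviation is unprofitable. The principal then selects at best $\max_i U_{i,\min}$.

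\textbf{Step 4.} Taking expectations over candidate draws gives the gap stated above. This gives the claimed lower bound.

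The main obstacle is Step 3: showing that \emph{every} PIM is forced to accept the low answers. If a mechanism rejects them, it risks returning nothing on instances whose entire support is low. I would attack this with a two-instance indistinguishability argument, using an instance scaled so that only low values occur, which forces the eligible set to include those values. I would then take a limit as in Lemma~\ref{lemma:symmetric}.
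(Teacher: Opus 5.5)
The paper gives no proof of this lemma; it is imported from \citet{shin2023delegating}. So your argument has to stand on its own, and its central step, the equilibrium check in Step~3, does not. You argue that a deviation can only make agent $i$ win with an answer that is worse for it than its worst-for-principal answer, so it is unprofitable. But the baseline is agent $i$'s payoff \emph{in the profile}, and an agent whose worst answer is not selected gets the non-selection payoff (zero), not $U_y$ of that answer. Ex post, any answer that would win and has positive agent utility is therefore a strictly profitable deviation for a losing agent. This is the competitive pressure that Theorem~\ref{thm:multi-agent-delegation}(b) relies on.

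In the incomplete-information setting the comparison is $\mathbb{P}[\mathrm{win}\mid\omega]\,U_y(\omega)$ against $\mathbb{P}[\mathrm{win}\mid\omega_{\min}]\,U_y(\omega_{\min})$, and monotonicity of $U_y$ in $U$ does not decide it. For example, take $N=k=2$ and a two-type $D$: low or high principal value with probability $1/2$ each, and agent utility $1$ and $c$ respectively. Use pick-the-max with uniform tie-breaking. An agent holding one answer of each type wins with probability $3/8$ by proposing low and $7/8$ by proposing high. So ``propose your worst'' is an equilibrium only if $c\le 3/7$.

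The missing idea is to choose agent utilities that make proposing the worst answer weakly dominant under any selection rule. The simplest choice is $c=0$, so the high type gives the agent exactly its non-selection payoff. Then an agent holding a low answer loses nothing by proposing it, and an agent without one is indifferent. In the resulting equilibrium every proposal has principal value $U_{i,\min}$, so the principal's realized utility is pointwise at most $\max_i U_{i,\min}$, which is what Step~4 needs. With strictly positive utilities you would need $c$ small plus an explicit Bayes--Nash check; ex post the profile fails for every $c>0$.

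Step~1 should be dropped or repaired. Lemma~\ref{lemma:single agent} is the single-agent reduction of \citet{kleinberg2018delegated}, and Theorem~\ref{thm:multi-agent-delegation}(a) compares a single-agent problem with its multi-agent counterpart. Neither gives the within-multi-agent replacement you state. That construction also takes the eligible set to be the range of $f_{M,\sigma}$, which depends on $\sigma$, and hence on $D$ when agents are Bayesian. So the replacement need not be prior-independent, which undercuts Step~2. It also maps equilibria of $M$ to equilibria of $M'$, so a bad equilibrium of $M'$ says nothing about the worst equilibrium of $M$.

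With $c=0$ you do not need the reduction, because the profile above is an equilibrium of any single-proposal mechanism. It also removes the obstacle you flag. An agent whose low answer is ineligible is indifferent among all its proposals, so in the worst equilibrium (which is what $\text{PoA}_a$ measures) it may submit that answer anyway, and the bound $\max_i U_{i,\min}$ still holds. Prior-independence becomes essential only if you impose principal-favorable tie-breaking. Even then it does not \emph{force} the mechanism to accept low answers. It only lets you organize the argument as a case split on which answers the mechanism accepts, with a different instance defeating each case.
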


Lemma \ref{lemma:symmetric} illustrates a worst-case scenario in which the principal cannot access a good answer. This occurs when a super agent, not aligned with the principal, strategically submits low-utility answers that the principal is forced to accept, resulting in expected utility arbitrarily close to zero—even when much better answers exist.

Lemma \ref{lemma:symmetric-poaa-lowerbound} indicates that when the utility of agents is negatively correlated with that of the principal, agents tend to act selfishly and adversarially, submitting answers that harm the principal’s objective. Introducing independence addresses this issue by decoupling their incentives, thus reducing strategic misalignment. However, in our framework, the principal and agent are reacted based on same questions, therefore, assuming independent utility doesn't make sense. Therefore, these lemmas establish the necessity of the second and third part of Assumption \ref{assump:structure}.

\section{Computational Resources}\label{Sec: Resources}
Most of the computational cost occurs in Stage 1, where candidate answers are generated. Table \ref{tab:command_infer_cost} reports the mean number of model calls and generated tokens per question required to obtain candidate trajectories under 16 roll-outs. This cost scales linearly with both the number of questions and the number of roll-outs. In contrast, Stage 2 (delegation) is comparatively inexpensive - approximately 1.511 seconds per iteration per question. No additional infrastructure is needed to coordinate multiple agents.

Importantly, unlike traditional fine-tuning pipelines that require hours to days of compute, the ALIGN mechanism is fully training-free. Full-model fine-tuning typically demands thousands of GPU hours and can cost between $10,000$ and over $35,000$ per run \cite{liu2024deepseek}. Even parameter-efficient fine-tuning (PEFT) approaches such as FinLoRA \cite{wang2025finlora} still require several hours to days to update model weights. In contrast, ALIGN operates without modifying model parameters or performing any gradient updates, thereby eliminating the need for a training phase entirely.

\begin{table}[ht]
  \centering
  \caption{Inference cost of generating candidate answers stage of \textsc{ALIGN} on GSM8K: mean model calls and generated tokens per question}
  \label{tab:command_infer_cost}
  \resizebox{\linewidth}{!}{%
  \begin{tabular}{lcccc}
    \toprule
    & Mistral-8B-Instruct & Zephyr-7B & Phi3-Mini-4K-Instruct & Falcon-7B-Instruct \\
    \midrule
    Avg.\ calls         & 98.45 &  73.30 & 102.82 & 57.60 \\
    Avg.\ generated tokens & 144.80K & 81.74K & 263.76K& 51.81K \\
    \bottomrule
  \end{tabular}}
\end{table}
\end{document}